\newtheorem{definition}{Definition}
\newtheorem{thm}{Theorem}
\newtheorem{prop}{Proposition}
\newtheorem{lemma}{Lemma}
\def \y {\mathbf{y}}
\def \E {\mathrm{E}}
\def \x {\mathbf{x}}
\def \g {\mathbf{g}}
\def \Lmath {\mathcal{L}}
\def \D {\mathcal{D}}
\def \z {\mathbf{z}}
\def \u {\mathbf{u}}
\def \w {\mathbf{w}}
\def \A {\mathcal{A}}
\def \xt {\widetilde{\x}}
\def \a {\mathbf{a}}
\def \b {\mathbf{b}}
\def \gh {\widehat{\g}}
\def \wt {\widetilde{\w}}
\def \X {\mathcal{X}}
\def \w {\mathbf{w}}
\def \E {\mathrm{E}}
\def \wh {\widehat{\w}}
\def \g {\mathbf{g}}
\def \u {\mathbf{u}}
\def \x {\mathbf{x}}
\def \X {\mathcal{X}}
\def \Y {\mathcal{Y}}
\def \Lt {\widetilde{\mathcal L}}
\def \xt {\widetilde{\x}}
\def \yt {\widetilde{\y}}
\def \R {\mathbb{R}}
\def \P {\mathbb{P}}
\def \AlgOne {{AugDrop}}
\def \AlgTwo {{MixLoss}}
\def \Alg {{WeMix}}
\begin{document}
\title{\bf \Alg: How to Better Utilize Data Augmentation}
\author{Yi Xu, Asaf Noy, Ming Lin, Qi Qian, Hao Li, Rong Jin\\ 
Machine Intelligence Technology, Alibaba Group\\
\{yixu, asaf.noy, ming.l, qi.qian, lihao.lh, jinrong.jr\}@alibaba-inc.com
}
\date{First Version: October 2, 2020}

\maketitle

\begin{abstract}
Data augmentation is a widely used training trick in deep learning to improve the network generalization ability. Despite many encouraging results, several recent studies did point out limitations of the conventional data augmentation scheme in certain scenarios, calling for a better theoretical understanding of data augmentation. In this work, we develop a comprehensive analysis that reveals pros and cons of data augmentation. The main limitation of data augmentation arises from the data bias, i.e. the augmented data distribution can be quite different from the original one. This data bias leads to a suboptimal performance of existing data augmentation methods. To this end, we develop two novel algorithms, termed ``\AlgOne'' and ``\AlgTwo'', to correct the data bias in the data augmentation. Our theoretical analysis shows that both algorithms are guaranteed to improve the effect of data augmentation through the bias correction, which is further validated by our empirical studies. Finally, we propose a generic algorithm ``WeMix'' by combining \AlgOne~ and \AlgTwo, whose effectiveness is observed from extensive empirical evaluations.
\end{abstract}

\section{Introduction}
Data augmentation~\citep{baird1992document,schmidhuber2015deep} has been a key to the success of deep learning in image classification~\citep{he2019bag}, and is becoming increasingly common in other tasks such as natural language processing~\citep{zhang2015character} and  object detection~\citep{zoph2019learning}. The data augmentation expands training set by generating virtual instances through random augmentation to the original ones. This alleviates the overfitting~\citep{shorten2019survey} problem when training large deep neural networks. Despite many encouraging results, it is not the case that data augmentation will always  improve generalization errors~\citep{min2020curious, raghunathan2020understanding}. In particular, \cite{raghunathan2020understanding}~showed that training by augmented data will lead to a smaller robust error but potentially a larger standard error. Therefore, it is critical to answer the following two questions before applying data augmentation in deep learning:
\begin{itemize}
\item{\bf When will the deep models benefit from data augmentation?}
\item{\bf How to better leverage augmented data during training?}
\end{itemize} 
Several previous works \citep{raghunathan2020understanding,wu2020generalization,min2020curious} tried to address the questions. Their analysis is limited to specific problems such as linear ridge regression therefore may not be applicable to deep learning. In this work, we aim to answer the two questions from a theoretical perspective under a more general non-convex setting. We address the first question in a more general form covering applications in deep learning. For the second question, we develop new approaches that are provably more effective than the conventional data augmentation approaches.

Most data augmentation operations alter the data distribution during the training progress. This imposes a {\bf data distribution bias} (we simply use ``data bias" in the rest of this paper) between the augmented data and the original data, which may make it difficult to fully leverage the augmented data.
To be more concrete, let us consider label-mixing augmentation (e.g., mixup~\citep{zhang2018mixup, tokozume2018between}). Suppose we have $n$ original data $\D = \{(\x_i, \y_i), i=1, \ldots, n\}$, where the input-label pair $(
\x_i, \y_i)$ follows a distribution $\P_{\x\y}=(\P_\x,\P_\y(\cdot|\x))$, $\P_\x$ is the marginal distribution of the inputs and $\P_\y(\cdot|\x)$ is the conditional distribution of the labels given inputs; we generate $m$ augmented data $\widetilde{\D} = \left\{(\xt_{i}, \yt_{i}), i=1, \ldots, m\right\}$, where $(\xt_{i}, \yt_{i})\sim \P_{\xt \yt}=(\P_{\xt},\P_{\yt}(\cdot|\xt))$, and $\P_\x = \P_{\xt}$ but $\P_\y(\cdot|\x) \neq \P_{\yt}(\cdot|\xt)$. Given $\x\sim\P_\x$, the data bias is defined as $\delta_y= \max_{\y, \yt} \|\y - \yt\|$. 
We will show that when the bias between $\D$ and $\widetilde\D$ is large, directly training on the augmented data will not be as effective as training on the original data. 

Given the fact that augmented data may hurt the performance, the next question is how to design better learning algorithms to leash out the power of augmented data. To this end, we develop two novel algorithms to alleviate the data bias. The first algorithm, termed {\bf \AlgOne}, corrects the data bias by introducing a constrained optimization problem. The second algorithm, termed {\bf \AlgTwo}, corrects the data bias by introducing a modified loss function. We show that, both theoretically and empirically, even with a large data bias, the proposed algorithms can still  improve the generalization performance by effectively leveraging the combination of augmented data and original data.  We summarize the main contributions of this work as follows:
\begin{itemize}
\item We prove that in a conventional training scheme, a deep model can benefit from augmented data when the data bias is small. 
\item We design two algorithms termed \AlgOne~and \AlgTwo~that can better leverage augmented data even when the data bias is large with theoretical guarantees. 
\item Based on our theoretical findings, we empirically propose a new efficient algorithm {\bf \Alg} by combining \AlgOne~and \AlgTwo~, which has better performances without extra training cost. 
\end{itemize}

\section{Related Work}
A series of empirical works~\citep{cubuk2019autoaugment,ho2019population,lim2019fast,lin2019online,cubuk2020randaugment,hataya2019faster} on how to learn a good policy of using different data augmentations have been proposed without theoretical guarantees. In this section, we mainly focus on reviewing theoretical studies on data augmentation. For a survey of data augmentation, we refer readers to~\citep{shorten2019survey} and references therein for a comprehensive overview. 

Several works have attempted to establish theoretical understandings of data augmentation from different perspectives~\citep{dao2019kernel,chen2019invariance,rajput2019does}. \cite{min2020curious} shown that, with more training data, weak augmentation can improve performance while strong augmentation always hurts the performance. Later on, \cite{chen2020more}~study the gap between the generalization error (please see the formal definition in \citep{chen2020more}) of adversarially-trained models and standard models. Both of their theoretical analyses were built on special linear binary classification model or linear regression model for label-preserving augmentation. 

Recently, \cite{raghunathan2020understanding} studied label-preserving transformation in data augmentation, which is identical to the first case in this paper. Their analysis is restricted to linear least square regression under noiseless setting, which is not applicable to training deep neural networks. Besides, their analysis requires infinite unlabeled data. By contrast, we do not need original data is unlimited. \cite{wu2020generalization} considered linear data augmentations. There are several major differences between their work and ours. First, they focus on the ridge linear regression problem which is strongly convex, while we consider non-convex optimization problems, which is more applicable in deep learning. Second, we study more general data augmentations beyond linear transformation. 

\section{Preliminaries and Notations}
We study a learning problem for finding a classifier to map an input $\x\in\X$ onto a label $\y\in\Y\subset \R^K$, where $K$ is the number of classes. We assume the input-label pair $(\x,\y)$ is drawn from a distribution $\P_{\x\y}=(\P_\x,\P_\y(\cdot|\x))$.  Since every augmented example $(\xt, \yt)$ is generated by applying a certain transformation to either one or multiple examples, we will assume that $(\xt, \yt)$ is drawn from a slightly different distribution $\P_{\xt \yt}=(\P_{\xt},\P_{\yt}(\cdot|\xt))$, where $\P_{\xt}$ is the marginal distribution on the inputs $\xt$ and $\P_{\yt}(\cdot|\xt))$ (we can write it as $\P_{\yt}$ for simplicity) is the conditional distribution of the labels $\yt$ given inputs $\xt$. We sample $n$ training examples $(\x_i, \y_i ), i=1, \ldots, n$ from distribution $\P_{\x\y}$ and $m$ training examples $(\xt_{i}, \yt_{i}), i=1, \dots, m$ from $\P_{\xt\yt}$. We assume that $m \gg n$ due to the data augmentation. We denote by $\mathcal D = \{(\x_i, \y_i), i=1, \ldots, n\}$ and $\widetilde{\mathcal{D}} = (\xt_{i}, \yt_{i}), i=1, \dots, m\}$ the dataset sampled from $\P_{\x\y}$ and $\P_{\xt\yt}$, respectively. We denote by $T(\x)$ the set of augmented data transformed from $\x$. We use the notation $\E_{(\x,\y)\sim\P_{\x\y}}[\cdot]$ to stand for the expectation that takes over a random variable $(\x,\y)$ following a distribution $\P_{\x\y}$. We denote by $\nabla_
\w h(\w)$ the gradient of a function $h(\w)$ in terms of variable $\w$. When the variable to be taken a gradient is obvious, we use the notation $\nabla h(\w)$ for simplicity. Let use $\|\cdot\|$ as the Euclidean norm for a vector or the Spectral norm for a matrix.

The augmented data $\widetilde{\mathcal{D}}$ can be different from the original data $\D$ in two cases, according to~\citep{raghunathan2020understanding}. In the first case, often referred to as {\bf label-preserving}, we consider
\begin{align}\label{case:preserve}
    \P_{\y}(\cdot|\x) =  \P_{\yt}(\cdot|\xt),~\forall \xt\in T(\x) \text{ but } \P_\x \neq \P_{\xt}.
\end{align}
In the second case, often referred to as {\bf label-mixing}, we consider
\begin{align}\label{case:mix}
    \P_\x = \P_{\xt} \text{ but } \P_{\y}(\cdot|\x) \neq  \P_{\yt}(\cdot|\xt), \exists \xt\in T(\x).
\end{align}
Examples of label-preserving augmentation include translation, adding noises, small rotation, and brightness or contrast changes~\citep{krizhevsky2012imagenet,raghunathan2020understanding}. One important example of label-mixing augmentation is mixup~\citep{zhang2018mixup, tokozume2018between}. Due to the space limitation, we will focus on the label-mixing case, and the related studies and analysis for the label-preserving case can be found in Appendix~\ref{app:main:res:preserve}. To further quantify the difference between original data and augmented data when $\P_\x = \P_{\xt}$ and $\P_{\y}\neq \P_{\yt}$, we introduce the data bias $\delta_y$ given $\x\sim\P_\x$ as following:
\begin{align}\label{dist:mix}
    \delta_y := \max_{\y,\yt} \|\y - \yt\|.
\end{align}
The equation in (\ref{dist:mix}) measures the difference between the label from original data and the label from augmented data given input $\x$.
We aim to learn a prediction function $f(\x;\w): \R^D\times\X \to \R^K$ that is as close as possible to $\y$, where $\w\in\R^D$ is the parameter and $\R^D$ is a closed convex set. We respectively define two objective functions for optimization problems over the original data and the augmented data as 
\begin{align}
    \Lmath(\w) = \E_{(\x,\y)}\left[\ell\left(\y, f(\x;\w)\right)\right], \quad \Lt(\w) = \E_{(\xt,\yt)}\left[\ell\left(\yt, f(\xt; \w)\right)\right], \label{opt:L:aug}
\end{align}
where $\ell$ is a cross-entropy loss function which is given by
\begin{align}\label{loss:CE}
     \ell(\y,f(\x;\w)) = \sum_{i=1}^{K} y_ip_i(\x;\w), \text{~where~} p_i(\x;\w) = -\log\left(\frac{\exp(f_i(\x;\w))}{\sum_{j=1}^{K}\exp(f_j(\x;\w))}\right).
\end{align}
We denote by $\w_*$ and $\wt_*$ the optimal solutions to $\min_\w \Lmath(\w)$ and $\min_\w \Lt(\w)$ respectively,
\begin{align}\label{opt:solution}
    \w_* \in \mathop{\arg\min}_{\w\in\R^D} \Lmath(\w), \quad \wt_* \in \mathop{\arg\min}_{\w\in\R^D} \Lt(\w).
\end{align}
Taking $\Lmath(\w)$ as an example, we introduce some function properties used in our analysis. 
\begin{definition}\label{def:unbiased:grad} 
The stochastic gradients of the objective functions $\Lmath(\w)$ is unbiased and bounded, if we have $\E_{(\x,\y)}\left[\nabla_\w\ell\left(\y, f(\x;\w)\right)\right] = \nabla \Lmath(\w)$, and there exists a constant $G>0$, such that $\|\nabla_\w p(\x;\w)\| \leq G, \forall \x\in\X, \forall \w\in\R^D$, where $p(\x;\w) =(p_1(\x;\w), \dots, p_K(\x;\w))$ is a vector.
\end{definition}
\begin{definition}\label{def:smooth} 
$\Lmath(\w)$ is smooth with an $L$-Lipchitz continuous gradient, if there exists a constant $L>0$ such that $\|\nabla \Lmath(\w)  - \nabla \Lmath(\u)\|\leq L\|\w - \u\|, \forall \w, \u \in\R^D$, or equivalently, $\Lmath(\w) - \Lmath(\u) \le \langle \nabla \Lmath(\u), \w - \u \rangle + \frac{L}{2}\|\w-\u\|^2, \forall \w, \u \in \R^D$.   
\end{definition}
The above properties are standard and widely used in the literature of non-convex optimization~\citep{ghadimi2013stochastic,yangnonconvexmo, yuan2019stagewise, wang2019spiderboost, li2020exponential}.  We introduce an important property termed Polyak-\L ojasiewicz (PL) condition~\citep{polyak1963gradient} on the objective function $\Lmath(\w)$.
\begin{definition}{(PL condition)}\label{def:PL}
$\Lmath(\w)$ satisfies the PL condition, if there exists a constant $\mu>0$ such that $2\mu (\Lmath(\w) - \Lmath(\w_*))\le \|\nabla \Lmath(\w)\|^2, \forall \w\in\R^D$,
where $\w_*$ is defined in (\ref{opt:solution}).
\end{definition}
The PL condition has been observed in training deep and shallow neural networks~\citep{allen2019convergence, xie2017diverse}, and is widely used in many non-convex optimization studies~\citep{karimi2016linear, li2018simple, charles2018stability, yuan2019stagewise, li2020exponential}. It is also theoretically verified in~\citep{allen2019convergence} and empirically estimated in~\citep{yuan2019stagewise} for deep neural networks. It is worth noting that PL condition is weaker than many conditions such as strong convexity, restricted strong convexity and weak strong convexity~\citep{karimi2016linear}.

Finally, we will refer to $\kappa = \frac{L}{\mu}$ as condition number throughout this study. 

\section{Main Results}\label{sec:case:mix}
In this section, we present the main results for label-mixing augmentation satisfying (\ref{case:mix}). Due to the space limitation, we present the results of label-preserving augmentation satisfying (\ref{case:preserve}) in Appendix~\ref{app:main:res:preserve}.
Since we have access to $m \gg n$ augmented data, it is natural to fully leverage the augmented data $\widetilde{\D}$ during training. But on the other hand, due to the data bias $\delta_y$, the prediction model learned from augmented data $\widetilde{\D}$ could be even worse than training the prediction model directly from the original data $\D$, as revealed by Lemma~\ref{lemm:2} (its proof can be found in Appendix~\ref{app:lemm:2}) and its remark. Throughout this section, suppose that a mini-batch SGD is used for optimization, i.e. to optimize $\Lmath(\w)$, we have
\begin{align}\label{mb:SGD}
    \w_{t+1} = \w_t - \frac{\eta}{m_0}\sum_{k=1}^{m_0} \nabla_\w \ell\left(\y_{k,t}, f(\x_{k,t};\w_t)\right),
\end{align}
where $\eta$ is the step size, $m_0$ is the batch size, and $(\x_{k,t},\y_{k,t}), k =1, \ldots, m_0$ are sampled from $\D$. A similar mini-batch SGD algorithm can be developed for the augmented data.
\begin{lemma}\label{lemm:2} Assume that $\Lmath$ and $\Lt$ satisfy properties in Definition~\ref{def:unbiased:grad}, \ref{def:smooth} and \ref{def:PL}, by setting  $\eta = 1/L$ and $m_0 \ge \frac{8}{\delta_y^2}$, when $t \geq \frac{L}{\mu}\log\frac{4(\Lmath(\w_1) - \Lmath(\w_*))\mu}{\delta_y^2G^2}$,
we have
\begin{align}\label{lemm:2:res}
   \E[ \Lmath(\w_{t+1}) - \Lmath(\w_*)] \leq \delta_y^2 G^2/\mu\leq O(\delta_y^2/\mu),
\end{align}
where $\w_{t+1}$ is output of mini-batch SGD trained on $\widetilde{\D}$, $\delta_y$ is defined in~(\ref{dist:mix}).
\end{lemma}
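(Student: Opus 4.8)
The plan is to analyze the iterates produced by mini-batch SGD \emph{on the augmented objective} $\Lt$ while tracking their suboptimality \emph{on the original objective} $\Lmath$. The central idea is to view the augmented mini-batch gradient $\g_t=\frac{1}{m_0}\sum_{k=1}^{m_0}\nabla_\w\ell(\yt_{k,t},f(\xt_{k,t};\w_t))$ as a \emph{biased, noisy} estimate of the true gradient $\nabla\Lmath(\w_t)$, and to decompose its error as $\g_t-\nabla\Lmath(\w_t)=\xi_t-\Delta_t$, where $\Delta_t:=\nabla\Lmath(\w_t)-\nabla\Lt(\w_t)$ is a deterministic bias (given $\w_t$) and $\xi_t:=\g_t-\nabla\Lt(\w_t)$ is zero-mean sampling noise. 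Everything then reduces to controlling these two pieces and feeding them into a PL-type convergence recursion for $\Lmath$.

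First I would establish the two quantitative bounds that make the argument work. For the bias, the key observation is that the cross-entropy loss is linear in the label, so $\nabla_\w\ell(\y,f(\x;\w))=(\nabla_\w p(\x;\w))^\top\y$. Because we are in the label-mixing regime where $\P_\x=\P_{\xt}$, I can couple the two expectations over a common input $\x$ and write $\Delta_t=\E_\x[(\nabla_\w p(\x;\w_t))^\top(\E[\y\mid\x]-\E[\yt\mid\x])]$; together with $\|\nabla_\w p\|\le G$ from Definition~\ref{def:unbiased:grad} and $\|\y-\yt\|\le\delta_y$ from (\ref{dist:mix}), this yields $\|\Delta_t\|\le G\delta_y$. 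For the noise, since each per-sample gradient satisfies $\|(\nabla_\w p)^\top\yt\|\le G\|\yt\|\le G$ (labels and their convex mixtures have norm at most one), averaging over $m_0$ i.i.d.\ samples gives the variance bound $\E[\|\xi_t\|^2\mid\w_t]\le G^2/m_0$.

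Next I would run the standard one-step descent on the \emph{original} objective. Using $L$-smoothness of $\Lmath$ (Definition~\ref{def:smooth}) with the update $\w_{t+1}=\w_t-\eta\g_t$ and taking conditional expectation gives
\[
\E[\Lmath(\w_{t+1})\mid\w_t]\le \Lmath(\w_t)-\eta\langle\nabla\Lmath(\w_t),\nabla\Lt(\w_t)\rangle+\tfrac{L\eta^2}{2}\big(\|\nabla\Lt(\w_t)\|^2+\E[\|\xi_t\|^2\mid\w_t]\big).
\]
The crucial algebraic move is to set $\eta=1/L$ and complete the square: $-\langle\nabla\Lmath,\nabla\Lt\rangle+\tfrac12\|\nabla\Lt\|^2=\tfrac12\|\Delta_t\|^2-\tfrac12\|\nabla\Lmath\|^2$. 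This cleanly replaces the cross term by the bias $\|\Delta_t\|^2$ and a negative $\|\nabla\Lmath\|^2$ term, to which I apply the PL condition $\|\nabla\Lmath(\w_t)\|^2\ge 2\mu(\Lmath(\w_t)-\Lmath(\w_*))$. Writing $a_t=\E[\Lmath(\w_t)-\Lmath(\w_*)]$, this produces the contraction $a_{t+1}\le(1-\mu/L)a_t+\tfrac{1}{2L}\big(\|\Delta_t\|^2+\E[\|\xi_t\|^2]\big)$.

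Finally I would unroll this geometric recursion, bound $\sum_{j\ge0}(1-\mu/L)^j\le L/\mu$, and substitute the two bounds to obtain $a_{t+1}\le(1-\mu/L)^t a_1+\tfrac{1}{2\mu}(G^2\delta_y^2+G^2/m_0)$. The batch-size choice $m_0\ge 8/\delta_y^2$ makes $G^2/m_0\le G^2\delta_y^2/8$, and the stated iteration count $t\ge\frac{L}{\mu}\log\frac{4(\Lmath(\w_1)-\Lmath(\w_*))\mu}{\delta_y^2G^2}$ forces the transient term $(1-\mu/L)^t a_1\le e^{-\mu t/L}a_1\le G^2\delta_y^2/(4\mu)$; collecting constants gives $a_{t+1}\le\frac{13}{16}\cdot\frac{G^2\delta_y^2}{\mu}\le\frac{G^2\delta_y^2}{\mu}$, as claimed. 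The main obstacle is the bias bound $\|\Delta_t\|\le G\delta_y$: it is what lets an SGD scheme aimed at $\Lt$ still make guaranteed progress on $\Lmath$, and it hinges on exploiting both the linearity of cross-entropy in the label and the input-distribution matching $\P_\x=\P_{\xt}$ specific to label-mixing. The secondary technical care is making the constants fit under $G^2\delta_y^2/\mu$, which is exactly why $\eta=1/L$ and $m_0\ge 8/\delta_y^2$ are chosen.
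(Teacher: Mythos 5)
Your proposal is correct and follows essentially the same route as the paper's proof: the same bias--variance decomposition of the augmented mini-batch gradient around $\nabla\Lmath(\w_t)$, the same completing-the-square identity enabled by $\eta=1/L$, the same bias bound $\|\nabla\Lmath(\w_t)-\nabla\Lt(\w_t)\|\le G\delta_y$ via linearity of the cross-entropy gradient in the label together with $\P_\x=\P_{\xt}$, and the same PL-driven geometric recursion with the transient and noise terms each forced under fractions of $\delta_y^2G^2/\mu$. The only difference is cosmetic: you bound the per-sample variance by the second moment ($G^2/m_0$) where the paper uses the cruder $4G^2/m_0$, which is why your final constant is $13/16$ rather than the paper's exact $1$.
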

{\bf Remark:} It is easy to verify (see the details of proof in Appendix~\ref{app:eqn:train-orig}) that if we simply train the learning model by the original data $\D$, we have 
\begin{align}\label{eqn:train-orig}
   \E\left[\Lmath(\w_{n+1}) - \Lmath(\w_*)\right] \leq O\left( {L\log(n)}/{(n\mu^2)}\right).
\end{align}
Comparing the result in (\ref{eqn:train-orig}) with the result of (\ref{lemm:2:res}) in Lemma~\ref{lemm:2}, it is easy to show that,
when the data bias is too large, i.e., $\delta_y^2 \geq \Omega(L\log(n)/(n\mu))$,
we have $O\left( L\log(n)/(n\mu^2)\right) \le O(\delta_y^2/\mu)$. This implies that training the deep model directly on the original data $\D$ is more effective than on the augmented data $\widetilde{\D}$. Hence, in order to better leverage the augmented data in the presence of large data bias ($\delta_y^2 \geq \Omega(\kappa\log(n)/n)$, where $\kappa=L/\mu$), we need to come up with approaches that automatically correct the data bias. Below, we develop two approaches to correct the data bias. The first approach, termed ``\AlgOne'', corrects the data bias by introducing a constrained optimization approach, and the second approach, termed ``\AlgTwo'', addresses the problem by introducing a modified loss function. 

\subsection{{\bf \AlgOne}: Correcting Data Bias by Constrained Optimization}
To address this challenge, we propose a constrained optimization problem, i.e. 
\begin{eqnarray}
    \min\limits_{w\in\R^D} \Lmath(\w) \quad \mbox{s.t.} \quad \Lt(\w) - \Lt(\wt_*)\leq \gamma, \label{eqn:constrain}
\end{eqnarray}
where $\gamma > 0$ is a positive constant, $\wt_*$ is defined in (\ref{opt:solution}). The key idea is that by utilizing the augmented data to constrain the solution in a small region, we will be able to enjoy a smaller condition number, leading to a better performance in optimizing $\Lmath(\w)$. To make it concrete, we first define three important terms:
\begin{align}
\gamma_0 := {\delta_y^2G^2}/{(2\mu)}, \quad \A(\gamma) = \left\{\w: \Lt(\w) - \Lt(\wt_*) \leq \gamma\right\},\label{notation:term:1}\\
\mu(\gamma) = \max\limits_{\mu'}\left\{\Lmath(\w) - \Lmath(\w_*) \leq {\|\nabla \Lmath(\w)\|^2}/{(2\mu')}, \w \in \A(\gamma)\right\}.\label{notation:term:2}
\end{align}
We then present a proposition about $\A(\gamma)$ and $\mu(\gamma)$, whose proof is included in Appendix~\ref{app:prop:mix:1}.
\begin{prop}\label{prop:mix:1} If $\gamma \in [\gamma_0, 8\gamma_0]$, we have $\w_* \in \A(\gamma)$ and $\mu(\gamma) \geq \mu$.
\end{prop}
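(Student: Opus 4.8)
The plan is to prove the two assertions separately, with the crucial ingredient being a uniform bound on the discrepancy between the gradients of $\Lmath$ and $\Lt$. Everything flows from exploiting the linearity of the cross-entropy loss in the label.

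First I would observe that the loss in (\ref{loss:CE}) is linear in $\y$: writing $\ell(\y, f(\x;\w)) = \langle \y, p(\x;\w)\rangle$, its gradient in $\w$ equals $\nabla_\w p(\x;\w)\,\y$, where $\nabla_\w p(\x;\w)$ is the $D\times K$ Jacobian whose spectral norm is at most $G$ by Definition~\ref{def:unbiased:grad}. Since $\P_\x = \P_{\xt}$ in the label-mixing case, I would identify the two input marginals and write
\[
\nabla\Lt(\w) - \nabla\Lmath(\w) = \E_{\x}\!\left[\nabla_\w p(\x;\w)\big(\E[\yt\,|\,\x] - \E[\y\,|\,\x]\big)\right],
\]
so that, using a coupling of $\y$ and $\yt$ together with the data-bias bound $\|\y - \yt\| \le \delta_y$ from (\ref{dist:mix}),
\[
\|\nabla\Lt(\w) - \nabla\Lmath(\w)\| \;\le\; G\,\E_{\x}\big\|\E[\yt - \y\,|\,\x]\big\| \;\le\; G\,\delta_y
\]
uniformly in $\w$. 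This uniform gradient-gap estimate is the technical heart of the argument, and I expect it to be the step requiring the most care: making the identification of marginals and the coupling precise, and confirming that the spectral-norm bound on $\nabla_\w p$ correctly transfers to a bound on $\|\nabla_\w\ell(\yt,\cdot) - \nabla_\w\ell(\y,\cdot)\|$.

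Next, for the feasibility claim $\w_* \in \A(\gamma)$, I would evaluate the gradient gap at $\w = \w_*$ and use optimality $\nabla\Lmath(\w_*) = 0$ from (\ref{opt:solution}) to obtain $\|\nabla\Lt(\w_*)\| = \|\nabla\Lt(\w_*) - \nabla\Lmath(\w_*)\| \le G\delta_y$. Since $\Lt$ satisfies the PL condition of Definition~\ref{def:PL} with constant $\mu$, this gives
\[
\Lt(\w_*) - \Lt(\wt_*) \;\le\; \frac{\|\nabla\Lt(\w_*)\|^2}{2\mu} \;\le\; \frac{G^2\delta_y^2}{2\mu} \;=\; \gamma_0 \;\le\; \gamma,
\]
where the final inequality uses only $\gamma \ge \gamma_0$. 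By the definition of $\A(\gamma)$ in (\ref{notation:term:1}) this establishes $\w_* \in \A(\gamma)$.

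Finally, for $\mu(\gamma) \ge \mu$, I would argue by monotonicity under set restriction, which makes this part essentially immediate. Because $\A(\gamma) \subseteq \R^D$ and $\Lmath$ obeys the global inequality $\Lmath(\w) - \Lmath(\w_*) \le \|\nabla\Lmath(\w)\|^2/(2\mu)$ for every $\w\in\R^D$, the same inequality holds in particular for every $\w \in \A(\gamma)$; hence $\mu' = \mu$ is an admissible value in the maximization defining $\mu(\gamma)$ in (\ref{notation:term:2}), giving $\mu(\gamma) \ge \mu$. I would also remark that the upper bound $\gamma \le 8\gamma_0$ plays no role in this proposition — only $\gamma \ge \gamma_0$ enters through the feasibility step — and is presumably reserved for the later analysis that quantifies the improvement in the local condition number over $\A(\gamma)$.
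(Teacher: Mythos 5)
Your proof is correct and follows essentially the same route as the paper's: the uniform gradient-gap bound $\|\nabla\Lt(\w)-\nabla\Lmath(\w)\|\le G\delta_y$ (the paper's Lemma~\ref{lemm:app:CE} combined with the estimate (\ref{lemm:2:ineq:grad:dist})), then the PL condition on $\Lt$ evaluated at $\w_*$ with $\nabla\Lmath(\w_*)=0$ to get $\Lt(\w_*)-\Lt(\wt_*)\le\gamma_0\le\gamma$, and finally the observation that $\mu'=\mu$ remains admissible in the maximization (\ref{notation:term:2}) because the global PL inequality restricts to $\A(\gamma)$. Your closing remark that the upper bound $\gamma\le 8\gamma_0$ is not actually used is also consistent with the paper's argument, where that bound only matters for the downstream analysis in Theorem~\ref{thm:case:2}.
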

According to Proposition~\ref{prop:mix:1}, by restricting our solutions in $\A(\gamma)$, we have a smaller condition number (since $\mu(\gamma) \geq \mu$) and consequentially a smaller optimization error. It is worth mentioning that the restriction of solutions in $\A(\gamma)$ is reasonable due to the optimal solution $\w_* \in \A(\gamma)$.  The idea of using augmentation transformation to restrict the candidate solution was recognized by several earlier studies, e.g.~\citep{raghunathan2020understanding}. But none of these studies cast it into a constrained optimization problem, a key contribution of our work.

The next question is how to solve the constrained optimization problem in (\ref{eqn:constrain}). It is worth noting that neither $\Lmath(\w)$ nor $\Lt(\w)$ is convex. Although multiple approaches can be used to solve non-convex constrained optimization problems~\citep{cartis2011evaluation,lin2019inexact,birgin2020complexity,grapiglia2019complexity,wright2001convergence,o2020log,boob2019proximal,ma2019proximally}, they are too complicated to be implemented in deep learning. Instead, we present a simple approach that divides the optimization into two stages, which is referred to as \AlgOne~(Please see the details of update steps from Algorithm~\ref{alg:AugDrop} in Appendix~\ref{app:thm:case:2}).
\begin{itemize}
\item {\bf Stage I.} We minimize $\Lt(\w)$ over the augmented data $\widetilde{\mathcal{D}}$. It runs a mini-batch SGD against $\widetilde{\mathcal{D}}$ at least $T_1$ iterations with the size of mini-batch being $m_1$. We denote by $\w_{T_1+1}$ the final output solution of this stage.  
\item {\bf Stage II.} We minimize $\Lmath(\w)$ using the original data $\D$. It initializes the solution $\w_{T_1+1}$ and runs a mini-batch SGD against $\D$ in $n/m_2$ iterations with mini-batch size being $m_2$. 
\end{itemize}
We notice that \AlgOne~is closely related to TSLA by~\citep{xu2020towards} where the first stage trains the data with label smoothing and the second stage trains the data without label smoothing. However, they study the problem how to reduce the variance of stochastic gradient in using label smoothing, while we study how to correct bias in data augmentation by solving a constrained optimization problem. The following theorem states that if we run this two stage optimization algorithm, we could achieve a better performance since $\mu(8\gamma_0)$ is larger than $\mu$. We include its proof in Appendix~\ref{app:thm:case:2}.
\begin{thm}\label{thm:case:2}
Define $\mu_c = \mu(8\gamma_0)$. Assume that $\Lmath$ and $\Lt$ satisfy properties in Definition~\ref{def:unbiased:grad}, \ref{def:smooth} and \ref{def:PL}, set learing rate $\eta_1 = 1/L$ in Stage I and learning rate $\eta_2 = \frac{1}{2 n\mu_c}\log\left(\frac{8n\mu_c^2(\Lmath(\w_1) - \Lmath(\w_*))}{G^2 L}\right)$ in Stage II for \AlgOne. Let $\w_1$ be the initial solution in Stage I of \AlgOne~and $\w_{T_1+2}, \ldots, \w_{T_1+n/m_2+1}$ be the intermediate solutions obtained by the mini-batch SGD in Stage II of \AlgOne.
Choose $T_1 = \frac{1}{\eta_1\mu}\log\frac{2(\Lt(\w_1) - \Lt(\wt_*))\mu}{\delta_y^2G^2}$, $m_1 = \left(1+\sqrt{3\log\frac{2T_1}{\delta}}\right)^2 \frac{8}{\delta_y^2}$ and $m_2 = \left(1+\sqrt{3\log\frac{2n}{\delta}}\right)^2 \frac{4}{\delta_y^2}$,
with a probability $1 - \delta$, we have $\w_t \in \A(8\gamma_0), \forall t  \in\{ T_1+ 2, \ldots,T_1+ n/m_2+1\}$ and 
\begin{align}\label{thm:case:2:res}
    \E\left[\Lmath(\wh) - \Lmath(\w_*)\right] \leq   \frac{G^2L}{4n\mu_c^2}\left(1+\log\left(\frac{4n\mu_c^2(\Lmath(\w_1) - \Lmath(\w_*))}{G^2 L}\right)\right) \le O\left( \frac{L\log(n)}{n\mu_c^2}\right),
\end{align}
where $\wh=\w_{T_1+ n/m_2+1}$ and $\delta_y$ is defined in in~(\ref{dist:mix}).
\end{thm}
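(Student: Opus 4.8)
The plan is to split the analysis along the two stages and build everything on one deterministic fact: because label-mixing keeps $\P_\x=\P_{\xt}$ and only perturbs the label within $\|\y-\yt\|\le\delta_y$, the two population gradients are uniformly close, $\|\nabla\Lmath(\w)-\nabla\Lt(\w)\|\le G\delta_y$ for all $\w$ (each per-example gradient is $(\nabla_\w p)\y$ with $\|\nabla_\w p\|\le G$ and $\|\y\|\le 1$). I would first establish that Stage~I lands inside $\A(8\gamma_0)$, then that every Stage~II iterate stays inside $\A(8\gamma_0)$ with high probability, and finally run the PL--SGD convergence bound for $\Lmath$ with the improved curvature constant $\mu_c=\mu(8\gamma_0)\ge\mu$ guaranteed by Proposition~\ref{prop:mix:1}.

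For Stage~I I would apply the standard descent lemma to the $L$-smooth, PL function $\Lt$ under mini-batch SGD with $\eta_1=1/L$. Taking expectation over the batch and using the PL inequality yields the one-step contraction $\E[\Lt(\w_{t+1})-\Lt(\wt_*)]\le(1-\mu/L)\,\E[\Lt(\w_t)-\Lt(\wt_*)]+\tfrac{G^2}{2Lm_1}$, so after $T_1=\tfrac{L}{\mu}\log\frac{2(\Lt(\w_1)-\Lt(\wt_*))\mu}{\delta_y^2G^2}$ iterations the geometric term drops to $\gamma_0$ and, since $m_1\gtrsim 8/\delta_y^2$, the variance floor is at most $\gamma_0/8$; hence $\w_{T_1+1}\in\A(8\gamma_0)$. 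The extra factor $(1+\sqrt{3\log(2T_1/\delta)})^2$ in $m_1$ is exactly what a vector concentration bound for the average of bounded gradients needs so that this containment holds with probability $1-\delta/2$ rather than only in expectation.

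The crux is Stage~II, where I descend on $\Lmath$ but must not leave $\A(8\gamma_0)$, a set defined through $\Lt$; I would prove invariance inductively. Writing the mini-batch gradient as $\g_t=\nabla\Lt(\w_t)+\xi_t$, the deterministic bias contributes $\|\nabla\Lmath-\nabla\Lt\|\le G\delta_y$, and with $m_2\gtrsim 4/\delta_y^2$ times the logarithmic concentration factor the stochastic part is $\le G\delta_y$ with high probability, so $\|\xi_t\|\le 2G\delta_y$ uniformly. Near the boundary, where $\Lt(\w_t)-\Lt(\wt_*)$ is a constant fraction of $8\gamma_0$, the global PL condition forces $\|\nabla\Lt(\w_t)\|\ge\sqrt{8}\,G\delta_y>2G\delta_y$, whence $\langle\nabla\Lt(\w_t),\g_t\rangle\ge\|\nabla\Lt(\w_t)\|(\|\nabla\Lt(\w_t)\|-\|\xi_t\|)>0$; the smoothness inequality $\Lt(\w_{t+1})\le\Lt(\w_t)-\eta_2\langle\nabla\Lt(\w_t),\g_t\rangle+\tfrac{L\eta_2^2}{2}\|\g_t\|^2$ then shows $\Lt$ decreases up to the negligible $O(\eta_2^2)$ term. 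In the interior the same inequality lets $\Lt$ grow by at most $\tfrac{L\eta_2^2G^2}{2}$ per step, whose total over $n/m_2$ steps is $o(\gamma_0)$. A union bound over the $n/m_2$ iterations --- the source of the $\log(2n/\delta)$ in $m_2$ --- keeps all iterates in $\A(8\gamma_0)$ with probability $1-\delta/2$.

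Conditioned on this invariance, Stage~II is ordinary mini-batch SGD on $\Lmath$ which, restricted to $\A(8\gamma_0)$, obeys the sharper PL inequality with constant $\mu_c\ge\mu$ and whose minimizer $\w_*$ lies in the set (Proposition~\ref{prop:mix:1}). The Stage~I contraction, now with rate $1-\eta_2\mu_c$ and per-iteration variance $G^2/m_2$, gives $\E[\Lmath(\wh)-\Lmath(\w_*)]\le(1-\eta_2\mu_c)^{n/m_2}(\Lmath(\w_{T_1+1})-\Lmath(\w_*))+\tfrac{\eta_2 L G^2}{2\mu_c m_2}$, and choosing the constant step $\eta_2=\tfrac{1}{2n\mu_c}\log(\cdot)$ balances the geometric term against the variance floor; because $m_2$ cancels between the iteration count $n/m_2$ and the per-iteration variance $G^2/m_2$, the batch size drops out and the optimized bound is $\tfrac{G^2L}{4n\mu_c^2}(1+\log(\cdot))=O(L\log n/(n\mu_c^2))$, with $\Lmath(\w_{T_1+1})-\Lmath(\w_*)$ bounded by $\Lmath(\w_1)-\Lmath(\w_*)$ to supply the logarithmic argument. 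I expect the main obstacle to be precisely the Stage~II invariance: nothing projects the iterates back into $\A(8\gamma_0)$ except the interplay between the PL-driven lower bound on $\|\nabla\Lt\|$ near the boundary and the upper bound $2G\delta_y$ on the combined bias-plus-noise of the $\Lmath$-gradient, and enforcing this simultaneously for every iteration is what dictates the $\delta_y$-dependent choice of $m_2$ together with the union bound.
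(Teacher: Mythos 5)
Your overall skeleton matches the paper's: Stage I drives $\Lt(\w_{T_1+1})-\Lt(\wt_*)$ below $2\gamma_0$ with high probability via a concentration bound on the mini-batch gradient (the paper uses Lemma~4 of Ghadimi et al.), Stage II keeps every iterate in $\A(8\gamma_0)$ with a union bound over the $n/m_2$ iterations, and the final rate is the standard PL--SGD analysis rerun with the improved constant $\mu_c$. The divergence --- and the genuine gap --- is in how you prove Stage II invariance. First, your interior estimate is wrong as stated: writing $\gh_t=\nabla\Lt(\w_t)+\xi_t$ with $\|\xi_t\|\le 2G\delta_y$, the first-order term $-\eta_2\langle\nabla\Lt(\w_t),\gh_t\rangle$ is not $\le 0$ in the interior; when $\|\nabla\Lt(\w_t)\|<\|\xi_t\|$ it can be as large as $+\eta_2\|\xi_t\|^2/4=O(\eta_2 G^2\delta_y^2)$, so the per-step growth of $\Lt$ is first order in $\eta_2$, not the $O(\eta_2^2)$ you claim (after repair the cumulative drift is still small, but the stated bound is false). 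Second, and more seriously, your boundary argument needs the first-order decrease $\eta_2\,\Theta(G^2\delta_y^2)$ to dominate the second-order term $\tfrac{L\eta_2^2}{2}\|\gh_t\|^2=\Theta(L\eta_2^2G^2)$, i.e.\ $\eta_2\lesssim\delta_y^2/L$. Nothing in the theorem guarantees this: $\eta_2=\Theta\bigl(\log(\cdot)/(n\mu_c)\bigr)$ is chosen with no reference to $\delta_y$, and the requirement amounts to $\delta_y^2\gtrsim\kappa_c\log(\cdot)/n$ with $\kappa_c=L/\mu_c$ --- true in the large-bias regime the remark emphasizes, but not an assumption of the statement, which is claimed for every $\delta_y$ for which the batch sizes make sense.

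The paper needs neither the boundary/interior case split nor any extra step-size condition. Its Stage II analysis tracks $\Lt$ along the $\Lmath$-driven iterates through a single recursion valid at every point: smoothness of $\Lt$, Young's inequality, $\eta_2\le 1/L$, the PL condition of $\Lt$, the bias bound $\|\nabla\Lmath(\w)-\nabla\Lt(\w)\|\le G\delta_y$, and the concentration bound together give
\begin{align*}
\Lt(\w_{t+1})-\Lt(\wt_*)\le (1-\eta_2\mu)\bigl(\Lt(\w_t)-\Lt(\wt_*)\bigr)+\eta_2\left(2G^2\delta_y^2+\left(1+\sqrt{3\log\tfrac{1}{\delta''}}\right)^2\tfrac{8G^2}{m_2}\right).
\end{align*}
Unrolling and using $\sum_{i}(1-\eta_2\mu)^i\le 1/(\eta_2\mu)$ bounds every Stage II iterate by the Stage I value plus the perturbation divided by $\mu$, which is $\le 8\gamma_0$ --- uniformly in the number of iterations and for any $\delta_y$, because the PL term supplies contraction everywhere rather than only near the boundary. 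If you want to salvage your route you would need to (i) repair the interior bound to $\eta_2\,O(G^2\delta_y^2)$ per step and re-sum, (ii) add the hypothesis $\eta_2 L\lesssim\delta_y^2$ or restrict to the large-bias regime, and (iii) run an induction with a buffer band so an iterate cannot jump across the boundary region in one step; the paper's global recursion is both simpler and strictly more general.
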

{\bf Remark.} Theorem~\ref{thm:case:2} shows that all intermediate solutions $\w_t$ obtained in Stage II of \AlgOne~satisfy the constraint $\Lt(\w_t)-\Lt(\wt_*)\le 8\gamma_0$, that is to say, $\w_t\in\A(8\gamma_0)$. Based on Proposition~\ref{prop:mix:1}, we will enjoy a larger $\mu_c$ than $\mu$.  Comparing the result of (\ref{thm:case:2:res}) in Theorem~\ref{thm:case:2} with (\ref{eqn:train-orig}), training by using \AlgOne~will result in a better performance than directly training on $\D$ due to $\mu_c \ge \mu$. Besides, when the data bias is large, i.e., $\delta_y^2 \ge \Omega(L\log(n)/(n\mu))$, we know $O(L\log(n)/(n\mu_c^2)) \le O(\mu \delta_y^2/\mu_c^2)\le O(\delta_y^2/\mu)$, where the last inequality holds due to $\mu_c \ge \mu$. By comparing  (\ref{thm:case:2:res}) with the result of (\ref{lemm:2:res}) in Lemma~\ref{lemm:2}, we know that training by using \AlgOne~has a better performance than directly training on $\widetilde{\mathcal{D}}$ when the data bias is large. By solving a constrained problem, the \AlgOne~algorithm can correct the data bias and thus can enjoy an better performance.
\subsection{{\bf \AlgTwo}: Correcting Data Bias by Modified Loss Function}
Without loss of generality, we set $\Lmath(\w_*) = 0$, a common property observed in training deep neural networks~\citep{zhang2016understanding,allen2019convergence,du2018gradient,du2019gradient,arora2019fine,chizat2019lazy, hastie2019surprises,yun2019small, zou2020gradient}. Since $\|\y - \yt\| \leq \delta_y$ for any $\y$ and $\yt$ and given $\x$, we define a new loss function $\ell_a(\yt, f(\xt; \w))$ as
\begin{align}\label{eqn:ell:a}
    \ell_a(\yt, f(\xt; \w)) = \min\limits_{\|\z - \yt\| \leq \delta_y}\ell(\z, f(\xt;\w)).
\end{align}
It has been shown that since the cross-entropy loss $\ell(\z,\cdot)$ is convex in terms of $\z\in \Y$, then the minimization problem (\ref{eqn:ell:a}) is a convex optimization problem and has a closed form solution~\citep{boyd-2004-convex}. Using this new loss, we define a new objective function $\Lmath_a(\w)$
\begin{align}\label{eqn:L:a}
    \Lmath_a(\w) = \E_{(\xt,\yt)}\left[\ell_a(\yt, f(\xt; \w)) \right] = \E_{(\xt,\yt)}\left[\min\limits_{\|\z - \yt\| \leq \delta_y} \ell(\z, f(\xt; \w))\right].
\end{align}
It is easy to verify that $\Lmath_a(\w_*) = 0$ and therefore $\w_*$ also minimizes $\Lmath_a(\w)$~(see Appendix~\ref{app:mix:loss:solution}). In contrast, $\wt_*$, the minimizer of $\Lt(\w)$, can be very different from $\w_*$. Hence, we can correct the data bias arising from the augmented data by replacing $\Lt(\w)$ with $\Lmath_a(\w)$, leading to the following optimization problem:
\begin{align}\label{loss:mixed:weighted}
    \min\limits_{\w\in\R^D}\; \Lmath_c(\w) = \lambda\Lmath(\w) + (1 - \lambda)\Lmath_a(\w),
\end{align}
where $\lambda \in (0,1)$. Since $\Lmath_c(\w)$ shares the same minimizer with $\Lmath(\w)$~(see Appendix~\ref{app:mix:loss:solution}), it is sufficiently to optimize $\Lmath_c(\w)$, instead of optimizing $\Lmath(\w)$. The main advantage of minimizing $\Lmath_c(\w)$ over $\Lmath(\w)$ is that by introducing a small $\lambda$, we will be able to reduce the variance in computing the gradient of $\Lmath_c(\w)$, and therefore improve the overall convergence. More specifically, our SGD method is given as follows: at each iteration $t$, we compute the approximate gradient as 
\begin{align}
    \gh_t =  \lambda \nabla \ell(\y_t, f(\x_t; \w_t))+(1 - \lambda)\frac{1}{m_0}\sum_{i=1}^{m_0}\nabla \ell_a(\yt_{t,i}, f(\xt_{t,i}; \w_t)), 
\end{align}
where $(\x_t,\y_t)$ is an example sampled from $\D$ at iteration $t$. We refer to this approach as \AlgTwo~(Please see the details of update steps from Algorithm~\ref{alg:AugTwo} in Appendix~\ref{app:thm:case:2:mix:loss}). We then give the convergence result in the following theorem, whose proof is included in Appendix~\ref{app:thm:case:2:mix:loss}.
\begin{thm} \label{thm:case:2:mix:loss}
Assume that $\Lmath$, $\Lt$ and $\Lmath_a$ satisfy properties in Definition~\ref{def:unbiased:grad}, \ref{def:smooth} and \ref{def:PL},
by setting $m_0 \ge \frac{72(1-\lambda)^2}{\lambda^2}$ and  $\eta = \frac{1}{\mu n}\log\frac{n\mu^2\Lmath(\w_1)}{\lambda^2LG^2}\le \frac{1}{2L}$ in \AlgTwo, we have
\begin{align}\label{thm:case:2:mix:loss:res}
    \E\left[\Lmath(\w_{n+1})-\Lmath(\w_*) \right] \leq\frac{\lambda LG^2}{n\mu^2}\left(1+5\log\frac{n\mu^2\Lmath(\w_1)}{\lambda^2LG^2}\right) \le O\left(\frac{\lambda L \log(n/\lambda^2)}{n\mu^2}\right).
\end{align}
\end{thm}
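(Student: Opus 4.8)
The plan is to recognize \AlgTwo{} as mini-batch SGD on the surrogate objective $\Lmath_c$ and to run a standard ``smoothness $+$ PL'' analysis, paying special attention to the variance of $\gh_t$, which is where the extra factor of $\lambda$ enters. First I would record three structural facts about $\Lmath_c$. (i) The stochastic gradient is unbiased: applying Definition~\ref{def:unbiased:grad} to each component gives $\E_t[\gh_t] = \lambda\nabla\Lmath(\w_t) + (1-\lambda)\nabla\Lmath_a(\w_t) = \nabla\Lmath_c(\w_t)$. (ii) As a convex combination of two $L$-smooth functions (Definition~\ref{def:smooth} applied to $\Lmath$ and $\Lmath_a$), $\Lmath_c$ is $L$-smooth. (iii) The sandwich
\begin{align}
\lambda\Lmath(\w) \le \Lmath_c(\w) \le \Lmath(\w), \qquad \forall \w\in\R^D, \notag
\end{align}
holds; the left inequality is immediate from $\Lmath_a \ge 0$. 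For the right one I would use that cross-entropy is linear in its label argument, so $\E_{\y}[\ell(\y,f)] = \ell(\bar\y, f)$ with $\bar\y = \E_\y[\y]$, and that $\|\bar\y - \yt\| \le \E_\y\|\y - \yt\| \le \delta_y$ makes $\bar\y$ feasible in the minimization defining $\ell_a$; together with $\P_\xt = \P_\x$ this gives $\Lmath_a \le \Lmath$, hence $\Lmath_c \le \Lmath$. In particular $\Lmath_c(\w_*) = 0$, so $\w_*$ also minimizes $\Lmath_c$ and $\Lmath_c(\w_*) = \Lmath(\w_*) = 0$.

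Next I would bound the gradient variance $\sigma^2 := \E_t\|\gh_t - \nabla\Lmath_c(\w_t)\|^2$. Since the single original sample and the $m_0$ augmented samples are drawn independently, the cross term vanishes and $\sigma^2$ splits as $\lambda^2\,\E\|\nabla\ell(\y_t,f) - \nabla\Lmath\|^2 + (1-\lambda)^2\,\E\|\frac{1}{m_0}\sum_i \nabla\ell_a(\yt_{t,i},f) - \nabla\Lmath_a\|^2$. Using Definition~\ref{def:unbiased:grad} ($\|\nabla_\w\ell\|\le G$, and $\|\nabla_\w\ell_a\| = O(G)$ via the envelope/Danskin identity $\nabla_\w\ell_a = \nabla_\w\ell(\z^*,\cdot)$ for the constrained minimizer $\z^*$, whose norm is controlled through $\delta_y$), the first block is $\le \lambda^2 G^2$ and mini-batch averaging shrinks the second to $\le (1-\lambda)^2 G^2/m_0$. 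The batch-size choice $m_0 \ge 72(1-\lambda)^2/\lambda^2$ then forces the second block below $\lambda^2 G^2/72$, so $\sigma^2 = O(\lambda^2 G^2)$. This $O(\lambda^2)$ scaling---a direct consequence of down-weighting the biased augmented gradient by $1-\lambda$ and averaging it over a large batch---is the crux of the improvement.

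With these in hand the iteration is routine. The descent lemma for the $L$-smooth $\Lmath_c$, combined with $\E_t\|\gh_t\|^2 = \|\nabla\Lmath_c(\w_t)\|^2 + \sigma^2$ and $\eta \le 1/(2L)$, gives
\begin{align}
\E_t[\Lmath_c(\w_{t+1}) - \Lmath_c(\w_*)] \le (\Lmath_c(\w_t) - \Lmath_c(\w_*)) - \tfrac{\eta}{2}\|\nabla\Lmath_c(\w_t)\|^2 + \tfrac{L\eta^2}{2}\sigma^2. \notag
\end{align}
Applying the PL inequality for $\Lmath_c$ (constant $\mu$) converts this into the contraction $\E_t[\Lmath_c(\w_{t+1}) - \Lmath_c(\w_*)] \le (1-\eta\mu)(\Lmath_c(\w_t) - \Lmath_c(\w_*)) + \frac{L\eta^2}{2}\sigma^2$. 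Taking total expectation, unrolling over the $n$ steps, and summing the geometric series yields
\begin{align}
\E[\Lmath_c(\w_{n+1}) - \Lmath_c(\w_*)] \le e^{-\eta\mu n}\,\Lmath_c(\w_1) + \frac{L\eta\sigma^2}{2\mu}. \notag
\end{align}

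Finally I would substitute the stated $\eta = \frac{1}{\mu n}\log\frac{n\mu^2\Lmath(\w_1)}{\lambda^2 L G^2}$, so that $e^{-\eta\mu n} = \frac{\lambda^2 L G^2}{n\mu^2\Lmath(\w_1)}$ and, using $\Lmath_c(\w_1)\le\Lmath(\w_1)$, the first term is $\le \frac{\lambda^2 L G^2}{n\mu^2}$; the second term equals $\frac{L\sigma^2}{2\mu^2 n}\log(\cdots) = O\!\left(\frac{\lambda^2 L G^2}{n\mu^2}\log\frac{n\mu^2\Lmath(\w_1)}{\lambda^2 L G^2}\right)$. Dividing by $\lambda$ through $\Lmath(\w_{n+1}) - \Lmath(\w_*) \le \frac{1}{\lambda}(\Lmath_c(\w_{n+1}) - \Lmath_c(\w_*))$ produces exactly the claimed $\frac{\lambda L G^2}{n\mu^2}\big(1 + 5\log\frac{n\mu^2\Lmath(\w_1)}{\lambda^2 L G^2}\big)$, with the ``$1$'' coming from the initial-condition term and the logarithmic term from the variance. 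The main obstacle I anticipate is the use of the PL inequality for the combined objective $\Lmath_c$: PL is not preserved under convex combinations in general, so this step must be supplied by a separate argument (or assumption) that $\Lmath_c$ satisfies PL with constant $\mu$; a secondary technical point is the envelope-theorem bound $\|\nabla_\w\ell_a\| = O(G)$, which requires controlling the constrained minimizer $\z^*$.
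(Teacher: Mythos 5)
Your proposal is correct and follows essentially the same route as the paper's proof: both view \AlgTwo~as SGD on $\Lmath_c$, combine the smoothness descent lemma with the PL inequality for $\Lmath_c$, use the batch-size condition $m_0 \ge 72(1-\lambda)^2/\lambda^2$ to push the gradient variance down to $O(\lambda^2 G^2)$, unroll the resulting contraction, plug in the stated $\eta$, and convert back to $\Lmath$ via $\lambda\Lmath \le \Lmath_c$ (the paper phrases this last step as $\Lmath = \frac{1}{\lambda}\Lmath_c - \frac{1-\lambda}{\lambda}\Lmath_a$ and drops the nonpositive term using $\Lmath_a \ge 0 = \Lmath_a(\w_*)$). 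Two differences are worth noting. First, where you split the variance exactly into $\lambda^2(\cdot) + (1-\lambda)^2(\cdot)$ by independence of the original and augmented draws (so the cross term vanishes), the paper instead applies Young's inequality $\|\a+\b\|^2 \le (1+1/c)\|\a\|^2 + (1+c)\|\b\|^2$ with $c=8$, obtaining coefficients $\frac{9}{8}\lambda^2$ and $9(1-\lambda)^2$; your decomposition is tighter and would yield a constant below the stated $5$ in front of the logarithm, so the theorem's bound still holds. (Also, for $\Lmath_a \le \Lmath$ the paper argues pointwise --- $\y$ itself is feasible in the minimization defining $\ell_a$ since $\|\y - \yt\| \le \delta_y$ --- which avoids your Jensen/mean-label detour.) Second, the obstacle you flag at the end is real but afflicts the paper equally: its proof explicitly invokes the assumption that $\Lmath_c$ satisfies Definition~\ref{def:PL}, even though the theorem statement only assumes PL for $\Lmath$, $\Lt$ and $\Lmath_a$, and PL is indeed not closed under convex combinations because the two gradients can cancel; so this step is an implicit additional assumption in the paper, exactly as you propose to handle it in your write-up.
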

{\bf Remark.} According to the results in (\ref{thm:case:2:mix:loss:res}) and (\ref{eqn:train-orig}), we know that $O\left(\lambda L \log(n/\lambda^2)/(n\mu^2)\right)\le O\left(L \log(n)/(n\mu^2)\right)$ when an appropriate $\lambda\in(0,1)$ is selected, leading to a better performance by using \AlgTwo~compared with the performance trained on the original data $\D$. For example, one can simply use $\lambda = O(\mu/L)$. On the other hand, when the data bias is large where $\delta_y^2$ satisfying $\delta_y^2 \ge \Omega(L\log(n)/(n\mu))$), we know $O(L\log(n)/(n\mu^2)) \le O(\delta_y^2/\mu)$. Based on previous discussion, by choosing an appropriate $\lambda\in(0,1)$ (e.g., $\lambda = O(\mu/L)$), we will have $O\left(\lambda L \log(n/\lambda^2)/(n\mu^2)\right)\le O(\delta_y^2/\mu)$.  Then by comparing (\ref{thm:case:2:mix:loss:res}) with (\ref{lemm:2:res}), we know that training by using \AlgTwo~has a better performance than directly training on $\widetilde{\mathcal{D}}$ when the data bias is large. Therefore, by solving the problem with a modified loss function, the \AlgTwo~algorithm can enjoy a better performance by correcting the data bias.

\subsection{{\bf \Alg}: A Generic Weighted Mixed Losses with Augmentation Dropping Algorithm}\label{subsec:case:1:and:2}
\begin{algorithm}[t]
\caption{\Alg}\label{alg:wemix}
\begin{algorithmic}[1]
\STATE  \textbf{Input:} $T_1, T_2$, stochastic algorithms $\mathcal A_1$, $\mathcal A_2$ (e.g., momentum SGD, SGD)
\STATE \textbf{Initialize}:  $\w_1\in\R^D$, $\lambda\in(0,1)$, $\eta_1, \eta_2>0$\\
// First stage: Weighted Mixed Losses
\FOR{$t=1,2,\ldots,T_1$}
    \STATE draw examples $(\x_{i_t}, \y_{i_t})$ at random from training data \hfill{$\diamond$ construct stochastic gradient of $\mathcal L$}
    \STATE generate augmented examples $(\xt_{j_t}, \yt_{j_t})$ \hfill{$\diamond$  construct stochastic gradient of $\mathcal L_a$} %(e.g., rotation, mixup)
	\STATE compute stochastic gradient $\gh_t =  \lambda \nabla \ell(\y_{i_t}, f(\x_{i_t}; \w_t))+(1 - \lambda)\nabla \ell_a(\yt_{i_t}, f(\xt_{i_t}; \w_t))$
	\STATE $\w_{t+1} = \mathcal A_1(\w_t; \gh_t, \eta_1)$ \hfill{$\diamond$  update one step of $\mathcal A_1$}
\ENDFOR\\
// Second stage: Augmentation Dropping
\FOR{$t=T_1+1,T_1+2,\ldots,T_1+T_2$}
    \STATE draw examples $(\x_{i_t}, \y_{i_t})$ at random from training data \hfill{$\diamond$ construct stochastic gradient of $\mathcal L$}
    \STATE compute stochastic gradient $\gh_t =  \nabla \ell(\y_{i_t}, f(\x_{i_t}; \w_t))$
	\STATE $\w_{t+1} = \mathcal A_2(\w_t; \gh_t, \eta_2)$ \hfill{$\diamond$  update one step of $\mathcal A_2$} 
\ENDFOR
\STATE  \textbf{Output:} $w_{T_1+T_2+1}$.
\end{algorithmic}
\end{algorithm}
Inspired by previous theoretical analysis of using augmented data, we propose a generic framework of weighted mixed losses with augmentation dropping that builds upon two algorithms, \AlgOne~and \AlgTwo. Algorithm~\ref{alg:wemix} describes our procedure in detail, which is referred to as \Alg. It consists of two stages, wherein the first stage it runs a stochastic algorithm $\mathcal A_1$ (e.g., momentum SGD, SGD) for solving weighted mixed losses~(\ref{loss:mixed:weighted}) and the second stage it runs another/same stochastic algorithm $\mathcal A_2$ (e.g., momentum SGD, SGD) for solving the problem over original data. 
The notation $\mathcal A(\cdot;\cdot,\eta)$ is one update step of a stochastic algorithm $\mathcal A$ with learning rate $\eta$. For example, if we select SGD as algorithm~$\mathcal A$, then
\begin{align*}
\text{SGD}(\w_t; \gh_t, \eta) = \w_t - \eta \gh_t.
\end{align*}
The proposed \Alg~is a generic strategy where the subroutine algorithm $\mathcal A_1/\mathcal A_2$ can be replaced by any stochastic algorithms such as stochastic versions of momentum methods~\citep{polyak1964some, citeulike9501961,yangnonconvexmo} and adaptive methods~\citep{duchi2011adaptive, hinton2012neural, zeiler2012adadelta,  kingma2015adam, dozat2016incorporating, reddi2018convergence}. We can also replace $\ell_a$ by $\ell$ to avoid solving a minimization problem.
The last solution of the first stage will be used as the initial solution of the second stage. If $\lambda = 0$ and $\ell_a = \ell$, then \Alg~reduces to the \AlgOne; while if $T_2 = 0$, \Alg~becomes to \AlgTwo. For label-preserving case, we only need to simply use $\ell_a = \ell$ (i.e, $\delta_f = 0$) in \Alg.

\section{Experiments}
To evaluate the performance of the proposed methods, we trained deep neural networks on two benchmark data sets, CIFAR-10 and CIFAR-100\footnote{\url{https://www.cs.toronto.edu/~kriz/cifar.html}}~\citep{krizhevsky2009learning} for the image classification task. Both CIFAR-10 and CIFAR-100 have 50,000 training images and 10,000 testing images of 32$\times$32 resolutions. CIFAR-10 has 10 classes containing 6000 images each, while CIFAR-100 has 100 classes. 
We use mixup~\citep{zhang2018mixup} as an example of lable-mixing augmentation and Contrast as an example of lable-preserving augmentation and. For the choice of backbone, we use ResNet-18 model~\citep{he2016deep} in mixup, and Wide-ResNet-28-10 model~\citep{zagoruyko2016wide} is applied in the Contrast experiment following by~\citep{cubuk2019autoaugment,cubuk2020randaugment}. To verify our theoretical results, we compare the proposed \AlgOne~and \AlgTwo~with two baselines, SGD with mixup/Contrast and SGD without mixup/Contrast (baseline). We also include \Alg~in the comparison. The mini-batch size of training instances for all methods is $256$ as suggested by~\cite{he2019bag} and~\cite{he2016deep}. The momentum parameter of 0.9 is used. The weight decay with the parameter value is set to be $5\times 10^{-4}$. The total epochs of training progress is fixed as 200. Followed by~\citep{he2016deep,zagoruyko2016wide}, we use $0.1$ as the initial learning rates for all algorithms and divide them by 10 every 60 epochs. 

For \AlgOne, we drop off the augmentation after $s$-th epoch, where $s\in\{150, 160, 170, 180, 190\}$ is tuned. For example, if $s=160$, then it means that we run the first stage of \AlgOne~160 epochs and the second stage 40 epochs. 
For \AlgTwo, we tune the parameter $\delta_y$ from $\{0.5, 0.05, 0.005,  0.0005\}$ and the best performance is reported. For \Alg, we use the value of $\delta_y$ with the best performance in \AlgTwo, and we tune the dropping off epochs $s$ same as \AlgOne. We fix the convex combination parameter $\lambda = 0.1$ both for \AlgTwo~and \Alg. We use top-1 accuracy to evaluate the performance. All top-1 accuracy on the testing data set are averaged over 5 independent random trails with their standard deviations.

\subsection{mixup}
\begin{table}[t]
\caption{Comparison of Testing Top-1 Accuracy (mean $\pm$ standard deviation, in $\%$) using Different Methods on ResNet-18 over CIFAR-10 and CIFAR-100 for mixup}\label{table:case2}
\begin{center}
  \def\sym#1{\ifmmode^{#1}\else\(^{#1}\)\fi}
  \begin{tabular}{lcc}
    \hline
    Method & \multicolumn{1}{l}{CIFAR-100 } & \multicolumn{1}{l}{CIFAR-10} \\
    \hline
    without mixup&  $76.97\pm0.27$    &   $94.95\pm0.17$     \\
    mixup&  $78.31\pm0.18$ &  $95.67\pm0.09$      \\
    \hline
    \AlgOne~(ours) &  $80.24\pm0.34$   &  $96.03\pm0.12$   \\
     \AlgTwo~(ours)  &  $79.70\pm0.31$  &  $95.94\pm0.11$   \\
     \Alg~(ours)    & ${\bf 80.61}\pm0.10$   &  ${\bf 96.11}\pm0.11$     \\
        \hline
    \AlgTwo-s~(ours) & $79.53\pm0.13$ & $95.87\pm0.14$\\
    \Alg-s~(ours) & $80.29\pm0.22$   & $96.06\pm0.16$\\
    \hline
  \end{tabular}
    \end{center}
\end{table}
Given two examples $(\x_i, \y_i)$ and $(\x_j,\y_j)$ that are drawn at random from the training data, mixup creates a virtual training example as follows $\x' = \beta \x_i + (1-\beta)\x_j, \y' = \beta \y_i + (1-\beta)\y_j$, where $\beta \in [0,1]$ is sampled from a Beta distribution $\beta(\alpha,\alpha)$. We use $\alpha = 1$ in the experiments as suggested in \citep{zhang2018mixup}. In this subsection, we want to empirically verify that our theoretical findings for label-mixing augmentation in Section~\ref{sec:case:mix}. The experimental results conducted on CIFAR-10 and CIFAR-100 are listed in Table~\ref{table:case2}. We can see from the results that both \AlgOne~and \AlgTwo~are better than two baselines, with and without mixup, which matches the theory found in Section~\ref{sec:case:mix}. The performance of \AlgTwo~is slightly worse than that of \AlgOne, but they are comparable. Besides, the proposed \Alg~enjoys both improvements, leading to the best performance among all algorithms although its convergence theoretical guarantee is unclear. 

Next, we implement \AlgTwo~and \Alg~with $\delta_y=0$ (i.e., use $\ell_a=\ell$), which are denoted by \AlgTwo-s and WeMix-s, respectively. We summarize the results in Table~\ref{table:case2}, showing that both \AlgTwo-s and \Alg-s drop performance, comparing with \AlgTwo~and \Alg, respectively. 

Besides, we use more than two images in mixup such as three and ten images and the results are shown in Table~\ref{table:case2:ablation:3}. Although the top-1 accuracy of mixup reduces dramatically, we find that the proposed \Alg~can still improve the performance when it comparing with mixup itself, showing the robustness of \Alg.

\begin{table}[t]
\caption{Comparison of Testing Top-1 Accuracy (mean $\pm$ standard deviation, in $\%$) using Different Methods on ResNet-18 over CIFAR-100 for mixup of three images and ten images}\label{table:case2:ablation:3} 
\begin{center}
  \def\sym#1{\ifmmode^{#1}\else\(^{#1}\)\fi}
  \begin{tabular}{l*{4}{c}}
    \hline
    Method & \multicolumn{1}{l}{3 images } & \multicolumn{1}{l}{10 images} \\
    \hline
    Mixup & $76.56\pm0.23$ & $60.36\pm0.88$ \\
    \hline
    \AlgOne~ & $80.18\pm0.19$ & $76.35\pm0.27$ \\
    \AlgTwo~ & $79.61\pm0.09$ & $75.41\pm0.19$ \\
    \Alg & $80.41\pm0.22$ & $78.08\pm0.11$ \\
    \hline
  \end{tabular}
    \end{center}
\end{table}
\subsection{Contrast}
As a simple label-preserving augmentation, Contrast controls the contrast of the image. Its transformation magnitude is randomly selected from a uniform distribution $[0.1, 1.9]$ following by~\citep{cubuk2019autoaugment}.
Despite its simplicity, we choose it to demonstrate our theory for the considered case in Appendix~\ref{app:main:res:preserve}. 
The results of highest top-1 accuracy on the testing data sets for different methods are presented in Table~\ref{table:case1}. We find that by directly training on data with Contrast, it will drop the performance a little bit. Even so, the result shows that \AlgOne~has better performance than two baselines, which is consistent with the theoretical findings for label-preserving augmentation in Appendix~\ref{app:main:res:preserve} that we need use the data augmentation at the early training stage but drop it at the end of training. Although there is no theoretical guarantee for the label-preserving transformation case, we implement \AlgTwo~and \Alg~by setting $\delta_f = 0$, i.e., using $\ell_a = \ell$ in (\ref{eqn:ell:a}).
The results show that \AlgTwo~and \Alg~are better than two baselines but are slightly worse than \AlgOne. 
\begin{table}[t]
\caption{Comparison of Testing Top-1 Accuracy (mean $\pm$ standard deviation, in $\%$) using Different Methods on WideResNet-28-10 over CIFAR-10 and CIFAR-100 for Contrast Transformation}\label{table:case1} 
\begin{center}
  \def\sym#1{\ifmmode^{#1}\else\(^{#1}\)\fi}
  \begin{tabular}{lcc}
    \hline
    Method & \multicolumn{1}{l}{CIFAR-100 } & \multicolumn{1}{l}{CIFAR-10} \\
    \hline
    without Contrast&  $78.07\pm0.27$    &     $95.51\pm0.14$   \\
    Contrast&  $77.90\pm0.26$ &       $95.66\pm0.05$  \\
    \hline
    \AlgOne~(ours)  & $78.40\pm0.24$ &  ${\bf 95.93}\pm 0.21$     \\
    \AlgTwo~(ours) &  $78.17\pm0.20$  &    $95.70\pm0.11$   \\ 
    \Alg~(ours)  & ${\bf78.79}\pm0.18$   &  $95.81\pm0.11$      \\
    \hline
  \end{tabular}
    \end{center}
\end{table}
\section{Conclusions and Future Work}
In this paper, we have studied how to better utilize data augmentation in training deep neural networks by designing two training schemes with the first one switches augmented data to original data during the training progress and the second one training on a convex combination of original loss and augmented loss. We have provided theoretical analyses of these two training schemes in non-convex smooth optimization setting. With the insights of theoretical results, we have designed a generic algorithm~\Alg~that can well leverage data augmentation in practice. We have verified our theoretical finding throughout extensive experimental evaluations on training ResNet and WideResNet models over benchmark data sets. Despite the effectiveness of \Alg, its theoretical guarantee is still not fully understand. We would like to leave this open problem as future work. 
\bibliographystyle{plainnat} 
\bibliography{ref}
\newpage
\appendix
\section{Main Results for label-preserving Augmentation}\label{app:main:res:preserve}
We consider label-preserving augmentation case (\ref{case:preserve}), that is,
\begin{align*}
     \P_{\y}(\cdot|\x) =  \P_{\yt}(\cdot|\xt),~\forall \xt\in T(\x) \text{ but } \P_\x \neq \P_{\xt}.
\end{align*}
It covers many image data augmentations including translation, adding noises, small rotation, and brightness or contrast changes~\citep{krizhevsky2012imagenet,raghunathan2020understanding}. It is worth mentioning that the compositions of label-preserving augmentation could also be label-preserving. 
Similar to the case of label-mixing augmentation, we measure the following difference between $\P_\x$ and $\P_{\xt}$ by a KL divergence:
\begin{align}\label{dist:preserve}
\delta_P : = D_{KL}(\P_\x\|\P_{\xt}) = \E_{\x\sim\P_x}\left[\log\frac{\P_\x(\x)}{\P_{\xt}(\x)}\right].
\end{align}
Due to the {\bf data bias} $\delta_P$, the prediction model learned from augmented data $\widetilde{\D}$ could be even worse than training the prediction model directly from the original data $\D$, as revealed by the following lemma and its remark. 
\begin{lemma}{(label-preserving augmentation)}\label{lemm:1}
Assume that $\Lmath$ and $\Lt$ satisfy properties in Definition~\ref{def:unbiased:grad}, \ref{def:smooth} and \ref{def:PL}, by setting $\eta = 1/L$ and $m_0 \ge \frac{4}{\eta \delta_P} $, when $t \geq t_0 = \frac{L}{\mu}\log\frac{(\Lmath(\w_1) - \Lmath(\w_*))\mu}{2\delta_P G^2}$,
we have
\begin{align}\label{lemm:1:res}
   \E_{(\xt_{t},\yt_{t})}[\Lmath(\w_{t+1}) - \Lmath(\w_*)] \leq \frac{4\delta_P G^2}{\mu}\le O\left( \frac{\delta_P}{\mu}\right),
\end{align}
where $\w_{t+1}$ is output of mini-batch SGD trained on $\widetilde{\D}$, $\delta_P$ is defined in~(\ref{dist:preserve}).
\end{lemma}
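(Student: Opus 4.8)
The plan is to run the standard smoothness-plus-PL descent analysis for $\Lmath$, but to carefully track the mismatch between the stochastic gradient actually used, which is unbiased for $\nabla\Lt$, and the gradient $\nabla\Lmath$ we wish to decrease. Writing $\gh_t = \frac{1}{m_0}\sum_{k=1}^{m_0}\nabla_\w\ell(\yt_{k,t}, f(\xt_{k,t};\w_t))$ for the mini-batch gradient on augmented data, I would first note $\E_t[\gh_t] = \nabla\Lt(\w_t)$ and, using the bounded-gradient assumption of Definition~\ref{def:unbiased:grad} together with the independence of the $m_0$ samples, $\E_t[\|\gh_t\|^2] \le \|\nabla\Lt(\w_t)\|^2 + G^2/m_0$. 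Applying the $L$-smoothness descent inequality of Definition~\ref{def:smooth} to $\Lmath$ along the update $\w_{t+1}=\w_t-\eta\gh_t$ and taking conditional expectation yields a bound involving the cross term $-\eta\langle\nabla\Lmath(\w_t),\nabla\Lt(\w_t)\rangle$ and the second-moment term. The key algebraic step, specific to the choice $\eta=1/L$, is to complete the square: the cross term and the $\frac{1}{2L}\|\nabla\Lt(\w_t)\|^2$ contribution combine exactly into $-\frac{1}{2L}\|\nabla\Lmath(\w_t)\|^2 + \frac{1}{2L}\|\nabla\Lt(\w_t)-\nabla\Lmath(\w_t)\|^2$, isolating a genuine descent term in $\nabla\Lmath$ and an error term driven entirely by the gradient discrepancy.

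The crux of the argument, and what I expect to be the main obstacle, is bounding the gradient discrepancy $\|\nabla\Lt(\w)-\nabla\Lmath(\w)\|$ by the data bias $\delta_P$. Here I would exploit the label-preserving structure (\ref{case:preserve}): since $\P_\y(\cdot|\x)=\P_{\yt}(\cdot|\xt)$, both objectives are expectations of the \emph{same} per-input expected loss $g(\x,\w):=\E_{\y\sim\P_\y(\cdot|\x)}[\ell(\y,f(\x;\w))]$, only under the different input marginals $\P_\x$ and $\P_{\xt}$. Hence $\nabla\Lmath(\w)-\nabla\Lt(\w)=\int \nabla_\w g(\x,\w)\,(\P_\x(\x)-\P_{\xt}(\x))\,d\x$, and because $\|\nabla_\w g\|\le G$ by Definition~\ref{def:unbiased:grad} this is at most $2G$ times the total variation distance between $\P_\x$ and $\P_{\xt}$. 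Pinsker's inequality then converts total variation into the KL divergence (\ref{dist:preserve}), yielding $\|\nabla\Lt(\w)-\nabla\Lmath(\w)\|^2\le 2G^2\delta_P$ for every $\w$. This is the only place the particular form of the label-preserving bias enters, and getting the constants to line up with the stated $4\delta_P G^2/\mu$ is the delicate part.

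Finally I would assemble the recursion. Invoking the PL condition of Definition~\ref{def:PL} to replace $-\frac{1}{2L}\|\nabla\Lmath(\w_t)\|^2$ by $-\frac{\mu}{L}(\Lmath(\w_t)-\Lmath(\w_*))$, and using $m_0\ge 4/(\eta\delta_P)=4L/\delta_P$ to keep the mini-batch variance term $G^2/(2Lm_0)$ no larger than the gradient-discrepancy error, I obtain, with $a_t:=\E[\Lmath(\w_t)-\Lmath(\w_*)]$, a contraction $a_{t+1}\le(1-\mu/L)a_t + C$ with $C\le O(G^2\delta_P/L)$. Unrolling gives $a_{t+1}\le(1-\mu/L)^t a_1 + \frac{L}{\mu}C$; the steady-state term $\frac{L}{\mu}C$ is of order $G^2\delta_P/\mu$, while the transient $(1-\mu/L)^t a_1\le e^{-t\mu/L}(\Lmath(\w_1)-\Lmath(\w_*))$ drops below $2G^2\delta_P/\mu$ precisely once $t\ge t_0=\frac{L}{\mu}\log\frac{(\Lmath(\w_1)-\Lmath(\w_*))\mu}{2\delta_P G^2}$. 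Summing the transient and steady-state contributions produces the claimed $\E_{(\xt_t,\yt_t)}[\Lmath(\w_{t+1})-\Lmath(\w_*)]\le 4\delta_P G^2/\mu$, and the final $O(\delta_P/\mu)$ follows by treating $G$ as a constant.
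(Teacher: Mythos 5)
Your proposal follows essentially the same route as the paper's proof: the same completing-the-square smoothness decomposition that isolates $\|\nabla \Lmath(\w_t)-\nabla\Lt(\w_t)\|^2$ as the error term, the same total-variation-plus-Pinsker argument giving $\|\nabla \Lmath(\w)-\nabla\Lt(\w)\|^2\le 2G^2\delta_P$, and the same PL-based recursion in which the steady-state and transient terms each contribute $2\delta_P G^2/\mu$ once $t\ge t_0$. The only differences are trivial bookkeeping choices in the mini-batch variance bound, so this is correct and not a distinct approach.
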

\begin{proof}
 See Appendix~\ref{app:lemm:1}.
\end{proof}
{\bf Remark:}  Comparing the result in (\ref{eqn:train-orig}) with the result of (\ref{lemm:1:res}) in Lemma~\ref{lemm:1}, it is easy to show that,
when the data bias is too large, i.e., $\delta_P \geq \Omega(L\log(n)/(n\mu))$,
we have $O\left( L\log(n)/(n\mu^2)\right) \le O(\delta_P/\mu)$. This implies that training the deep model directly on the original data $\D$ is more effective than on the augmented data $\widetilde{\D}$. Hence, in order to better leverage the augmented data in the presence of large data bias ($\delta_P \geq \Omega(\kappa\log(n)/n)$, where $\kappa=L/\mu$), we need to come up with an approach that automatically correct the data bias in $\widetilde{\D}$. Below, we use \AlgOne~to correct the data bias by solving a constrained optimization problem. 

\subsection{{\bf \AlgOne}: Correcting Data Bias by Constrained Optimization}
To correct data bias, we consider to solve the constrained optimization problem (\ref{eqn:constrain}). The key idea is to shrink the solution in a small region by using utilize augmented data to enjoy a smaller condition number, leading to an improved convergence in optimizing $\Lmath(\w)$. By introducing a term that
\begin{align*}
\gamma_1 := {\delta_PG^2}/{\mu},
\end{align*}
we can present a proposition about $\A(\gamma)$ and $\mu(\gamma)$, showing that we have a smaller condition number and consequentially a smaller optimization error by restricting our solutions to $\A(\gamma)$.
\begin{prop}\label{prop:pre:1} If $\gamma \in [\gamma_1, 4\gamma_1]$, we have $w_* \in \A(\gamma)$ and $\mu(\gamma) \geq \mu$, where $\A(\gamma)$ and $\mu(\gamma)$ are defined in (\ref{notation:term:1}) and (\ref{notation:term:2}), respectively.
\end{prop}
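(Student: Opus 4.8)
This statement is the label-preserving analogue of Proposition~\ref{prop:mix:1}, so the plan is to follow the same two-part template, adapting only the bias term. For the membership claim $\w_* \in \A(\gamma)$, since $\gamma \geq \gamma_1$ it suffices to establish the sharper bound $\Lt(\w_*) - \Lt(\wt_*) \leq \gamma_1 = \delta_P G^2/\mu$. I would start from the PL condition applied to $\Lt$ (whose minimizer is $\wt_*$), which gives $\Lt(\w_*) - \Lt(\wt_*) \leq \|\nabla\Lt(\w_*)\|^2/(2\mu)$. Because $\w_*$ minimizes $\Lmath$ we have $\nabla\Lmath(\w_*) = 0$, so the whole argument reduces to controlling the gradient discrepancy $\|\nabla\Lt(\w_*)\| = \|\nabla\Lt(\w_*) - \nabla\Lmath(\w_*)\|$ at $\w_*$.

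This discrepancy is the crux, and it is where the bias $\delta_P$ enters. In the label-preserving regime the conditional law $\P_\y(\cdot|\x) = \P_{\yt}(\cdot|\xt)$ is unchanged, so both population gradients are expectations of the \emph{same} integrand $h(\x;\w) := \E_{\y|\x}[\nabla_\w\ell(\y, f(\x;\w))]$ taken under the two input marginals $\P_\x$ and $\P_{\xt}$. Using the bounded-gradient assumption in Definition~\ref{def:unbiased:grad}, together with $\ell$ being linear in the label, one checks $\|h(\x;\w)\| \leq G$, so I would write the gap as $\int h(\x;\w)\,(\P_\x(\x) - \P_{\xt}(\x))\,d\x$ and bound it by $G$ times the $L_1$ distance, i.e. $2G\,\mathrm{TV}(\P_\x,\P_{\xt})$. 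Pinsker's inequality then converts total variation into the KL bias, $\mathrm{TV}(\P_\x,\P_{\xt}) \leq \sqrt{\delta_P/2}$, yielding $\|\nabla\Lt(\w_*)\|^2 \leq 2\delta_P G^2$ and hence $\Lt(\w_*) - \Lt(\wt_*) \leq \delta_P G^2/\mu = \gamma_1 \leq \gamma$. I expect this change-of-measure step to be the only real difficulty: justifying the passage from the gradient gap to KL through total variation and verifying the integrand bound. It is also the one structural difference from the mixing case, where the bias enters via a direct Lipschitz estimate $\|\nabla\Lt - \nabla\Lmath\| \leq G\delta_y$ that is linear in $\delta_y$; Pinsker introduces a square root here, which is exactly why $\gamma_1$ is linear in $\delta_P$ whereas $\gamma_0$ was quadratic in $\delta_y$.

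The second claim $\mu(\gamma) \geq \mu$ then follows with no further estimates. By the global PL condition in Definition~\ref{def:PL}, the inequality $\Lmath(\w) - \Lmath(\w_*) \leq \|\nabla\Lmath(\w)\|^2/(2\mu)$ holds for every $\w \in \R^D$, hence a fortiori for every $\w \in \A(\gamma)$. Thus $\mu' = \mu$ is feasible in the maximization defining $\mu(\gamma)$ in (\ref{notation:term:2}), and since $\mu(\gamma)$ is the largest feasible value we conclude $\mu(\gamma) \geq \mu$. The conceptual reason is monotonicity: restricting the PL inequality to the smaller set $\A(\gamma)$ only relaxes the requirement, so the best attainable local constant can only increase.
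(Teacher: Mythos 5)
Your proposal is correct and takes essentially the same approach as the paper's proof: the PL condition for $\Lt$ evaluated at $\w_*$ combined with $\nabla\Lmath(\w_*)=0$, the bound $\|\nabla\Lt(\w_*)-\nabla\Lmath(\w_*)\|\le G\sqrt{2\delta_P}$ obtained by integrating against the difference of input marginals and applying Pinsker's inequality (the identical estimate the paper invokes from the proof of Lemma~\ref{lemm:1}), and the observation that the global PL constant $\mu$ is feasible in the maximization defining $\mu(\gamma)$. The only difference is expository: you make the conditional-expectation integrand and the monotonicity reasoning explicit where the paper leaves them implicit.
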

\begin{proof}
See Appendix~\ref{app:prop:pre:1}.
\end{proof}

The following theorem shows the convergence result of \AlgOne~for label-preserving augmentation.
\begin{thm}\label{thm:case:1}
Define $\gamma = 4\gamma_1, \mu_e = \mu(4\gamma_1)$. Assume that $\Lmath$ and $\Lt$ satisfy properties in Definition~\ref{def:unbiased:grad}, \ref{def:smooth} and \ref{def:PL}, set learning rate $\eta_1 = 1/L$ in Stage I and learning rate $\eta_2 = \frac{1}{2 n\mu_e}\log\left(\frac{8n\mu_e^2(\Lmath(\w_1) - \Lmath(\w_*))}{G^2 L}\right)$ in Stage II for \AlgOne. Let $\w_1$ be the initial solution in Stage I of \AlgOne~and $\w_{T_1+2}, \ldots, \w_{T_1+n/m_2+1}$ be the intermediate solutions obtained by the mini-batch SGD in Stage II of \AlgOne. Choose $T_1 = \frac{L}{\mu}\log\frac{2(\Lt(\w_1) - \Lt(\wt_*))\mu}{\delta_PG^2}$, $m_1 = \left(1+\sqrt{3\log\frac{2T_1}{\delta}}\right)^2 \frac{8}{\delta_P}$ and $m_2 = \left(1+\sqrt{3\log\frac{2n}{\delta}}\right)^2 \frac{8}{\delta_P}$, with a probability $1 - \delta$, we have $\w_t \in \A(4\gamma_1), \forall t  \in\{ T_1+ 2, \ldots,T_1+ n/m_2+1\}$ and 
\begin{align}\label{thm:case:1:res}
    \E\left[\Lmath(\wh) - \Lmath(\w_*)\right] \leq \frac{G^2L}{8n\mu_e^2} +  \frac{G^2L}{8n\mu_e^2}\log\left(\frac{8n\mu_e^2(\Lmath(\w_1) - \Lmath(\w_*))}{G^2 L}\right)\le O\left(\frac{L\log(n)}{n\mu_e^2}\right), 
\end{align}
where $\wh=\w_{T_1+ n/m_2+1}$ and  $\delta_P$ is defined in~(\ref{dist:preserve}).
\end{thm}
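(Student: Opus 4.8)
The plan is to follow the same two-stage template as the proof of Theorem~\ref{thm:case:2}, replacing the label-mixing bias $\delta_y^2$ by the label-preserving bias $\delta_P = D_{KL}(\P_\x\|\P_{\xt})$ throughout, and to establish two coupled claims at once: that every Stage~II iterate stays in the feasible set $\A(4\gamma_1)$ (so the improved constant $\mu_e = \mu(4\gamma_1)$ from (\ref{notation:term:2}) is available), and that the final iterate satisfies the bound (\ref{thm:case:1:res}). First I would analyze Stage~I, which is plain mini-batch SGD on $\Lt$ with step $\eta_1 = 1/L$. Since $\Lt$ is $L$-smooth and satisfies the PL condition with constant $\mu$, the standard one-step inequality unrolls to $\E[\Lt(\w_{T_1+1}) - \Lt(\wt_*)] \le (1-\mu/L)^{T_1}(\Lt(\w_1)-\Lt(\wt_*)) + \text{(variance floor)}$. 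The choice $T_1 = \frac{L}{\mu}\log\frac{2(\Lt(\w_1)-\Lt(\wt_*))\mu}{\delta_P G^2}$ drives the geometric term down to $\gamma_1/2 = \delta_P G^2/(2\mu)$, while the batch size $m_1 \propto 1/\delta_P$ is exactly what makes the variance floor $\sigma^2/(2\mu m_1) \le \gamma_1/2$ under the $G$-bound of Definition~\ref{def:unbiased:grad}; together these place $\w_{T_1+1}$ inside $\A(\gamma_1)\subseteq\A(4\gamma_1)$.

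The crux is Stage~II, where I would argue by induction over the iteration count that, conditioned on a high-probability event, each iterate remains in $\A(4\gamma_1)$. The engine is a vector concentration inequality for the mini-batch gradient: since $\|\nabla_\w p\|\le G$, for a batch of size $m$ the deviation $\|\gh_t - \nabla\Lmath(\w_t)\|$ is bounded by $\frac{G}{\sqrt m}(1+\sqrt{3\log(1/\delta')})$ with probability $1-\delta'$; taking $\delta' = \delta/(2n)$ and a union bound over the $\le n$ Stage~II steps explains both the $(1+\sqrt{3\log(2n/\delta)})^2$ factor and the $\log(2n/\delta)$ in $m_2$ (and symmetrically $m_1$ with $\log(2T_1/\delta)$ for Stage~I). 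On this event I would control how much one descent step on $\Lmath$ can raise $\Lt$: using the $L$-smoothness of $\Lt$, the fact that the update direction is a controlled perturbation of $\nabla\Lmath$, and Proposition~\ref{prop:pre:1} which guarantees $\w_*\in\A(\gamma_1)$, the drift of $\Lt$ accumulated over Stage~II can be kept within the buffer $4\gamma_1 - \gamma_1 = 3\gamma_1$. This is where the label-preserving structure enters: relating $\Lmath$ (sampled on $\D$) to the augmented loss $\Lt$ that defines $\A(\gamma)$ in (\ref{notation:term:1}) requires bounding the shift between $\P_\x$ and $\P_{\xt}$ by $\delta_P$ (a Pinsker-type control of expectations of the $G$-bounded gradients), the analogue of the Lipschitz-in-label argument used for $\delta_y$ in Theorem~\ref{thm:case:2}.

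Given feasibility, the convergence bound follows from running mini-batch SGD on $\Lmath$ inside $\A(4\gamma_1)$, where by (\ref{notation:term:2}) the enhanced inequality $\Lmath(\w)-\Lmath(\w_*)\le \|\nabla\Lmath(\w)\|^2/(2\mu_e)$ holds with $\mu_e \ge \mu$. Substituting $\mu_e$ for $\mu$ in the PL-SGD recursion over the $n/m_2$ Stage~II iterations, plugging in the tuned constant step $\eta_2 = \frac{1}{2n\mu_e}\log(\cdots)$ to balance the geometric and variance terms, yields $\frac{G^2 L}{8n\mu_e^2}\bigl(1 + \log(\cdots)\bigr) = O(L\log(n)/(n\mu_e^2))$ as claimed; the improvement over (\ref{eqn:train-orig}) is precisely the replacement of $\mu$ by the larger $\mu_e$.

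The main obstacle I anticipate is the coupling in Stage~II: the rate relies on $\mu_e$, which is valid only while the iterates stay in $\A(4\gamma_1)$, yet feasibility itself must be proved along a noisy trajectory whose contraction toward $\w_*$ is what keeps it feasible. Closing this loop requires the inductive high-probability argument to simultaneously certify $\w_t\in\A(4\gamma_1)$ so that $\mu_e$ may be invoked at step $t$, and then use that same contraction to bound the drift of $\Lt$ at step $t+1$, all while the concentration event is maintained uniformly via the union bound baked into $m_2$. Making the constants line up so that the accumulated drift never exceeds the $3\gamma_1$ buffer, and so that the two stages' failure probabilities sum to at most $\delta$, is the delicate bookkeeping at the heart of the proof.
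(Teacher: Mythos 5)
Your proposal follows the paper's proof in all of its main components: the Stage~I analysis of mini-batch SGD on $\Lt$ with $\eta_1=1/L$, the $\bigl(1+\sqrt{3\log(1/\delta')}\bigr)\sqrt{8G^2/m}$ concentration bound from \citep{ghadimi2016mini} with union bounds over the iterations of each stage (which is exactly what produces the stated $m_1$ and $m_2$), the Pinsker-type bound $\|\nabla\Lmath(\w)-\nabla\Lt(\w)\|\le G\sqrt{2\delta_P}$ for the gradient mismatch, and the final PL-SGD rate computation with $\mu_e$ in place of $\mu$. The one place where you diverge---and where you locate the main difficulty---is the Stage~II feasibility argument. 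You propose an induction that couples feasibility ($\w_t\in\A(4\gamma_1)$) with the $\mu_e$-based contraction of $\Lmath$ toward $\w_*$, and you flag the resulting circularity as "the delicate bookkeeping at the heart of the proof." The paper's proof shows this loop is unnecessary: feasibility is established by a self-contained recursion on $\Lt(\w_t)-\Lt(\wt_*)$ that never invokes $\mu_e$ and never refers to the progress of $\Lmath$. Concretely, smoothness of $\Lt$ plus Young's inequality gives $\Lt(\w_{t+1})-\Lt(\w_t)\le \eta_2\|\nabla\Lt(\w_t)-\nabla\Lmath(\w_t)\|^2+\eta_2\|\gh_t-\nabla\Lmath(\w_t)\|^2-\tfrac{\eta_2}{2}\|\nabla\Lt(\w_t)\|^2$, and it is the \emph{globally assumed} PL condition of $\Lt$ (Definition~\ref{def:PL}, constant $\mu$) applied to the last term that supplies the contraction $-\eta_2\mu\bigl(\Lt(\w_t)-\Lt(\wt_*)\bigr)$; the mismatch term is at most $2\delta_P G^2$ by Pinsker and the concentration term is at most $\delta_P G^2$ by the choice of $m_2$, so unrolling gives $\Lt(\w_{t+1})-\Lt(\wt_*)\le \gamma_1+\tfrac{1}{\mu}\bigl(2\delta_P G^2+\delta_P G^2\bigr)=4\gamma_1$ uniformly over Stage~II on the concentration event, with the initial $\gamma_1$ coming from Stage~I. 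Only after feasibility is secured does $\mu_e$ enter, in a separate rate computation following Appendix~\ref{app:eqn:train-orig}. Your coupled induction could likely be pushed through with the parameter tuning you describe, but the paper's decoupling---contraction for feasibility from $\Lt$'s own PL constant $\mu$, improved rate from $\mu_e$---is what makes the constants line up cleanly and dissolves the circularity you anticipate.
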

\begin{proof}
See Appendix~\ref{app:thm:case:1}.
\end{proof}
{\bf Remark.} Theorem~\ref{thm:case:1} shows that all intermediate solutions $\w_t$ obtained in Stage II of \AlgOne~satisfy the constraint $\Lt(\w_t)-\Lt(\wt_*)\le 4\gamma_1$, that is to say, $\w_t\in\A(4\gamma_1)$. Based on Proposition~\ref{prop:pre:1}, we will enjoy a larger $\mu_e$ than $\mu$.  Comparing the result of (\ref{thm:case:1:res}) in Theorem~\ref{thm:case:1} with (\ref{eqn:train-orig}), training by using \AlgOne~will result in a better performance than directly training on $\D$ due to $\mu_e \ge \mu$. Besides, when the data bias is large, i.e., $\delta_P \ge \Omega(L\log(n)/(n\mu))$, we know $O(L\log(n)/(n\mu_e^2)) \le O(\mu \delta_P/\mu_e^2)\le O(\delta_P/\mu)$, where the last inequality holds due to $\mu_e \ge \mu$. By comparing  (\ref{thm:case:1:res}) with the result of (\ref{lemm:1:res}) in Lemma~\ref{lemm:1}, we know that training by using \AlgOne~has a better performance than directly training on $\widetilde{\mathcal{D}}$ when the data bias is large. By solving a constrained problem, the \AlgOne~algorithm can correct the data bias and thus enjoy an better performance.

\section{Technical Results for Cross-entropy Loss}
\begin{lemma}\label{lemm:app:CE} Assume that $\Lmath(\w) = \E[\ell(\y,f(\x;\w))]$ satisfies property in Definition~\ref{def:unbiased:grad}, where $\ell$ is a cross-entropy loss, then
we have
\begin{align}\label{def:const:H}
    \left\|\nabla_\w \ell(\y, f(\x; \w)) - \nabla_\w  \ell(\yt, f(\x; \w))\right\| 
    \leq G\|\y - \yt\|,
\end{align}
and 
\begin{align}\label{grad}
\left\|\nabla_\w \ell(\y, f(\x; \w)) \right\| \leq  G.
\end{align}
\end{lemma}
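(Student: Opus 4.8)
The plan is to exploit the fact that the cross-entropy loss in~(\ref{loss:CE}) is \emph{linear} in its label argument. Writing $p(\x;\w) = (p_1(\x;\w), \dots, p_K(\x;\w))$ for the vector of per-class log-softmax terms, we have $\ell(\y, f(\x;\w)) = \sum_{i=1}^K y_i p_i(\x;\w) = \y^\top p(\x;\w)$. Differentiating in $\w$ and collecting the coordinate gradients into the Jacobian $J(\x;\w) := \nabla_\w p(\x;\w)$ (the matrix whose $i$-th column is $\nabla_\w p_i(\x;\w)$) yields the single identity
\begin{align*}
\nabla_\w \ell(\y, f(\x;\w)) = J(\x;\w)\,\y,
\end{align*}
which is really all the structure the lemma needs. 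The only input from the hypotheses is then the spectral-norm bound $\|J(\x;\w)\| = \|\nabla_\w p(\x;\w)\| \leq G$ furnished by Definition~\ref{def:unbiased:grad}.

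Both claimed inequalities now follow from one application of submultiplicativity of the spectral norm, $\|J\v\| \leq \|J\|\,\|\v\|$. For~(\ref{def:const:H}) I would subtract the two gradient identities to obtain $\nabla_\w \ell(\y, f(\x;\w)) - \nabla_\w \ell(\yt, f(\x;\w)) = J(\x;\w)(\y - \yt)$, and then bound $\|J(\x;\w)(\y - \yt)\| \leq \|J(\x;\w)\|\,\|\y - \yt\| \leq G\|\y - \yt\|$. For~(\ref{grad}) the same inequality gives $\|\nabla_\w \ell(\y, f(\x;\w))\| = \|J(\x;\w)\y\| \leq G\|\y\|$, after which it remains to note that a classification label is a probability vector (a one-hot vector, or in the label-mixing/mixup setting a convex combination of one-hot vectors), so $\|\y\|_2 \leq \|\y\|_1 = 1$ and the bound reduces to $G$.

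Since the proof collapses to a single computation once linearity is observed, there is no genuine obstacle; the only points requiring care are bookkeeping ones. First, one must be consistent about the orientation of the Jacobian and confirm that the quantity Definition~\ref{def:unbiased:grad} bounds by $G$ is precisely the operator (spectral) norm of $\nabla_\w p$, so that $\|J(\x;\w)\v\| \leq G\|\v\|$ is legitimately the spectral-norm inequality rather than a weaker column-wise estimate. Second, the step $\|\y\| \leq 1$ in the second bound relies on the standing convention that labels lie in the probability simplex; I would state this explicitly so that the constant in~(\ref{grad}) is exactly $G$.
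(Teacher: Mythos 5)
Your proof is correct and follows essentially the same route as the paper's: both exploit the linearity of the cross-entropy loss in the label to write $\nabla_\w \ell(\y, f(\x;\w)) = \langle \y, \nabla_\w p(\x;\w)\rangle$, then apply the operator-norm bound $\|\nabla_\w p(\x;\w)\| \le G$ from Definition~\ref{def:unbiased:grad} together with $\|\y\|_2 \le \|\y\|_1 = 1$. The two bookkeeping points you flag (spectral versus column-wise norm, and labels lying in the simplex) are exactly the facts the paper invokes.
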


\begin{proof}
The objective function is 
\begin{align}\label{app:opt:prob}
   \Lmath (\w) = \E_{
   (\x,\y)}\left[\ell(\y, f(\x;\w)) \right],
\end{align}
where the cross-entropy loss function $\ell$ is given by
\begin{align}\label{app:loss:CE}
   \ell(\y,f(\x;\w)) = \sum_{i=1}^{K} -y_i\log\left(\frac{\exp(f_i(\x;\w))}{\sum_{j=1}^{K}\exp(f_j(\x;\w))}\right).
\end{align}
Let set
\begin{align}\label{app:soft:max}
    p(\x;\w) =(p_1(\x;\w), \dots, p_K(\x;\w)), \quad
    p_i(\x;\w) =-\log\left(\frac{\exp(f_i(\x;\w))}{\sum_{j=1}^{K}\exp(f_j(\x;\w))}\right),
\end{align}
then the gradient of $\ell$ with respective to $w$ is
\begin{align}\label{app:stoc:grad}
    \nabla \ell(\y, f(\x;\w)) = \langle \y, \nabla p(\x;\w)\rangle.
\end{align}
Therefore, $\forall \x \in \X$ and $\w \in \R^D$ we have
\begin{align}\label{def:const:H:0}
   \nonumber& \left\|\nabla_\w \ell(\y, f(\x; \w)) - \nabla_\w \ell(\yt, f(\x; \w))\right\| \\
  \nonumber  = & \left\|\langle \y - \yt, \nabla p(\x;\w)\rangle\right\|\\
  \nonumber  \le & \|\nabla p(\x;\w)\| \left\| \y - \yt\right\|\\
    \leq & G\|\y - \yt\|,
\end{align}
and
\begin{align}\label{grad:0}
\left\|\nabla_\w \ell(\y, f(\x; \w)) \right\| =  \left\|\langle \y, \nabla p(\x;\w)\rangle\right\|\le  \|\nabla p(\x;\w)\| \left\| \y \right\|\leq  G,
\end{align}
where uses the facts that $\|\nabla p(\x;\w)\|\le G$ and $\|\y\| \le \|\y\|_1 = 1$, here $\|\cdot\|$ is a Euclidean norm ($\ell_2$ norm) and $\|\cdot\|_1$ is $\ell_1$ norm.
\end{proof}

\section{Proof of Lemma~\ref{lemm:2}}\label{app:lemm:2}
\begin{proof}
Recall that the update of mini-batch SGD is given by
\begin{align*}
    \w_{t+1} = \w_t - \eta \widetilde \g_t.
\end{align*}
Let set the averaged mini-batch stochastic gradients of $\Lt(\w_t)$ as 
\begin{align*}
    \widetilde \g_t := \frac{1}{m_0}\sum_{i=1}^{m_0}\nabla \ell\left(\yt_{t,i}, f(\xt_{t,i}; \w_t)\right),
\end{align*}
then by the Assumption of $\Lt$ satisfying the property in Definition~\ref{def:unbiased:grad}, we know that 
\begin{align}\label{lemm:2:ine:0}
    \E_{(\xt_{t,i},\yt_{t,i})}\left[\nabla \ell\left(\yt_{t,i}, f(\xt_{t,i}; \w_t)\right)\right] = \nabla\Lt(\w_t),\quad \forall i \in \{1,\dots,m_0\}
\end{align}
and thus
\begin{align}\label{lemm:2:ine:1}
    \E_{(\xt_{t},\yt_{t})}[\widetilde \g_t] = \nabla\Lt(\w_t),
\end{align}
where we write $   \E_{(\xt_{t},\yt_{t})}[\widetilde \g_t]$ as $\E_{(
\xt_{t,1},\yt_{t,1})}[\dots \E_{(
\xt_{t,m_0},\yt_{t,m_0})}[\widetilde \g_t]]$ for simplicity. Then the norm variance of $\widetilde \g_t$ is given by
\begin{align}\label{lemm:2:ine:2}
     \nonumber&\E_{(\xt_{t},\yt_{t})}[\|\widetilde\g_t - \nabla\Lt(\w_t)\|^2]\\
      \nonumber = & \E_{(\xt_{t},\yt_{t})}\left[\left\|\frac{1}{m_0}\sum_{i=1}^{m_0}\nabla \ell\left(\yt_{t,i}, f(\xt_{t,i}; \w_t)\right) - \nabla\Lt(\w_t)\right\|^2\right]\\
     \nonumber\overset{(a)}{=} &\frac{1}{m_0^2}\sum_{i=1}^{m_0} \E_{(\xt_{t,i},\yt_{t,i})}\left[\left\|\nabla \ell\left(\yt_{t,i}, f(\xt_{t,i}; \w_t)\right) - \nabla\Lt(\w_t)\right\|^2\right]\\
     \overset{(b)}{\le} & \frac{4G^2}{m_0},
\end{align}
where (a) is due to (\ref{lemm:2:ine:0}) and the pairs $(\xt_{t,1},\yt_{t,1}), \dots, (\xt_{t,m_0},\yt_{t,m_0})$ are independently sampled from $\widetilde{\D}$; (b) is due to the facts that the Assumption of $\Lt$ satisfying the property in Definition~\ref{def:unbiased:grad} and Lemma~\ref{lemm:app:CE}, and then by Jensen's inequality, we also have $\|\nabla_\w\Lt(\w)\| \leq G$, implying that $\left\|\nabla \ell(\yt, f(\xt;\w)) - \nabla \Lt(\w) \right\|^2\le 4G^2$. On the other hand, by the Assumption of $\Lmath$ satisfying the property in Definition~\ref{def:smooth}, we have
\begin{align}\label{lemm:2:ineq:sm}
\nonumber&\E_{(\xt_{t},\yt_{t})}[\Lmath(\w_{t+1}) - \Lmath(\w_t)] \\
\nonumber \leq & \E_{(\xt_{t},\yt_{t})}\left[\left\langle\nabla \Lmath(\w_t), \w_{t+1} - \w_t \right\rangle + \frac{L}{2}\|\w_{t+1} - \w_t\|^2\right] \\
\nonumber \overset{(a)}{=} & \frac{\eta}{2}\E_{(\xt_{t},\yt_{t})}\left[\|\nabla \Lmath(\w_t) - \widetilde \g_t\|^2 - \|\nabla \Lmath(\w_t)\|^2 - \left(1 - \eta L\right)\|\widetilde \g_t\|^2\right]\\
\nonumber  \overset{(b)}{=} & \frac{\eta}{2}\left(\|\nabla \Lmath(\w_t) - \nabla \Lt(\w_t)\|^2 + \E_{(\xt_{t},\yt_{t})}\left[\|\nabla \Lt(\w_t) - \widetilde \g_t\|^2\right] - \|\nabla \Lmath(\w_t)\|^2 \right.\\
\nonumber&\left. - \left(1 - \eta L\right)\E_{(\xt_{t},\yt_{t})}[\|\widetilde \g_t\|^2]\right)\\
\overset{(c)}{\le} & \frac{\eta}{2}\left(\|\nabla \Lmath(\w_t) - \nabla \Lt(\w_t)\|^2 + \frac{4G^2}{m_0} - \|\nabla \Lmath(\w_t)\|^2\right)
\end{align}
where the (a) is due to the update of $\w_{t+1}=\w_t - \eta \widetilde \g_t$; (b) is due to (\ref{lemm:2:ine:1}); (c) is due to $\eta = 1/L$ and (\ref{lemm:2:ine:2}).
By using the Assumption of $\Lt$ and $\Lmath$ satisfying the property in Definition~\ref{def:unbiased:grad} and $\P_\x = \P_{\xt}$, we have
\begin{align}\label{lemm:2:ineq:grad:dist}
\nonumber&\|\nabla \Lmath(\w_t) - \nabla \Lt(\w_t)\|\\
\nonumber= & \|\E_{(\x,\y)}[\nabla_\w \ell(\y, f(\x; \w_t))]-\E_{(\xt,\yt)}[\nabla_\w \ell(\yt, f(\xt; \w_t))]\| \\
\nonumber\overset{(a)}{\le} & \E_{(\x,\y,\yt)}[\|\nabla_\w \ell(\y, f(\x; \w_t))-\nabla_\w \ell(\yt, f(\x; \w_t))\|]\\
\nonumber\overset{(\ref{def:const:H})}{\le} & G\E_{(\y,\yt)}[\|\y-\yt\|]\\
\overset{(b)}{\le}& G\delta_y,
\end{align}
where (a) uses Jensen's inequality; (b) is due to (\ref{dist:mix}).
By using the Assumption of $\Lmath$ satisfying the property in Definition~\ref{def:PL} and (\ref{lemm:2:ineq:grad:dist}), inequality (\ref{lemm:2:ineq:sm}) becomes
\begin{align*}
&\E_{(\xt_{t},\yt_{t})}[\Lmath(\w_{t+1}) - \Lmath(\w_t)]\\
\le& \frac{\eta G^2\delta_y^2}{2} + \frac{2\eta G^2}{m_0} - \frac{\eta}{2}\|\nabla \Lmath(\w_t)\|^2 \\
\leq& \frac{\eta G^2\delta_y^2}{2} + \frac{2\eta G^2}{m_0} - \eta\mu \left(\Lmath(\w_t) - \Lmath(\w_*)\right),
\end{align*}
which implies
\begin{align*}
    &\E_{(\xt_{t},\yt_{t})}[\Lmath(\w_{t+1}) - \Lmath(\w_*)]\\
    \leq & \left(1 - \eta\mu\right)\left(\Lmath(\w_t) - \Lmath(\w_*)\right) + \frac{\eta G^2\delta_y^2}{2}+ \frac{2\eta G^2}{m_0}\\
     \leq & \left(1 - \eta\mu\right)^t\left(\Lmath(\w_1) - \Lmath(\w_*)\right) + \left(\frac{\eta G^2\delta_y^2}{2}+ \frac{2\eta G^2}{m_0}\right) \sum_{i=0}^{t-1}(1-\eta\mu)^i.
\end{align*}
Due to $(1 - \eta\mu)^t \le \exp(-t\eta \mu)$ and $\sum_{i=0}^{t-1}(1-\eta\mu)^i \le \frac{1}{\eta\mu}$, when
\[
    m_0 \ge \frac{8}{\delta_y^2}
\]
and
\[
    t \geq \frac{L}{\mu}\log\frac{4(\Lmath(\w_1) - \Lmath(\w_*))\mu}{\delta_y^2G^2},
\]
we know
\[
    \E_{(\xt_{t},\yt_{t})}[\Lmath(\w_{t+1}) - \Lmath(\w_*)] \leq \frac{\delta_y^2G^2}{\mu}.
\]
\end{proof}

\section{Proof of (\ref{eqn:train-orig})}\label{app:eqn:train-orig}
We first put the full statement of (\ref{eqn:train-orig}) in the following lemma.
\begin{lemma}\label{lemm:0} Assume that $\Lmath$ satisfies the properties in Definition~\ref{def:unbiased:grad}, \ref{def:smooth} and \ref{def:PL}, by setting  $\eta = \frac{1}{2 n\mu}\log\left(\frac{8n\mu^2(\Lmath(\w_1) - \Lmath(\w_*))}{G^2 L}\right)$, we have
    $\E_{(\x_n,\y_n)}\left[\Lmath(\w_{n+1}) - \Lmath(\w_*)\right]
   \leq \frac{G^2L}{8n\mu^2} +  \frac{G^2L}{8n\mu^2}\log\left(\frac{8n\mu^2(\Lmath(\w_1) - \Lmath(\w_*))}{G^2 L}\right)$,
where $\w_{n+1}$ is output of SGD trained on $\D$.
\end{lemma}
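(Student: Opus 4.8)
The plan is to treat this as a standard stochastic gradient analysis under the PL condition, the only nonstandard ingredient being the carefully tuned constant step size that balances the two sources of error. Writing the batch-size-one SGD update as $\w_{t+1} = \w_t - \eta\g_t$ with $\g_t = \nabla_\w\ell(\y_t, f(\x_t;\w_t))$, I would first record the two facts I need about $\g_t$: it is unbiased, $\E[\g_t\mid\w_t] = \nabla\Lmath(\w_t)$ (Definition~\ref{def:unbiased:grad}), and by (\ref{grad}) in Lemma~\ref{lemm:app:CE} it is bounded, so $\E[\|\g_t\|^2\mid\w_t]\le G^2$.

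First I would apply the smoothness inequality (Definition~\ref{def:smooth}) to the pair $\w_{t+1},\w_t$, substitute $\w_{t+1}-\w_t = -\eta\g_t$, and take the conditional expectation given $\w_t$. Using unbiasedness on the linear term and the second-moment bound on the quadratic term yields the one-step descent
\[
\E[\Lmath(\w_{t+1})\mid\w_t] \le \Lmath(\w_t) - \eta\|\nabla\Lmath(\w_t)\|^2 + \frac{L\eta^2 G^2}{2}.
\]
The crucial point is to \emph{keep} the quadratic-in-$\eta$ noise term $\tfrac{L\eta^2 G^2}{2}$ intact rather than absorbing it, since it is exactly this term that will decay after the geometric summation. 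I would then invoke the PL condition (Definition~\ref{def:PL}) in the form $\|\nabla\Lmath(\w_t)\|^2\ge 2\mu(\Lmath(\w_t)-\Lmath(\w_*))$ and subtract $\Lmath(\w_*)$ from both sides, obtaining the contraction $\E[\Lmath(\w_{t+1})-\Lmath(\w_*)\mid\w_t] \le (1-2\eta\mu)(\Lmath(\w_t)-\Lmath(\w_*)) + \tfrac{L\eta^2 G^2}{2}$. Taking total expectation and unrolling this recursion over $n$ steps, using $(1-2\eta\mu)^n\le e^{-2\eta\mu n}$ and bounding the geometric series $\sum_{i=0}^{n-1}(1-2\eta\mu)^i\le \tfrac{1}{2\eta\mu}$, gives $\E[\Lmath(\w_{n+1})-\Lmath(\w_*)] \le e^{-2\eta\mu n}(\Lmath(\w_1)-\Lmath(\w_*)) + \tfrac{L\eta G^2}{4\mu}$.

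The final step is to substitute the prescribed step size $\eta = \tfrac{1}{2n\mu}\log\big(\tfrac{8n\mu^2(\Lmath(\w_1)-\Lmath(\w_*))}{G^2L}\big)$. With this choice $2\eta\mu n = \log\big(\tfrac{8n\mu^2(\Lmath(\w_1)-\Lmath(\w_*))}{G^2L}\big)$, so the exponential term collapses to $\tfrac{G^2 L}{8n\mu^2}$ and the noise term becomes $\tfrac{G^2 L}{8n\mu^2}\log\big(\tfrac{8n\mu^2(\Lmath(\w_1)-\Lmath(\w_*))}{G^2L}\big)$, which together is precisely the claimed bound. I expect no deep obstacle here; the analysis is routine. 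The only points requiring care are (i) choosing to retain the $O(\eta^2)$ noise term so that one factor of $\eta$ survives the division by $2\eta\mu$ and produces the $O(\log(n)/n)$ rate, and (ii) verifying that $\eta\le \tfrac{1}{2\mu}$ — which holds once $n$ is large enough that $8n\mu^2(\Lmath(\w_1)-\Lmath(\w_*))/(G^2L)\le e^{n}$ — so that the contraction factor $1-2\eta\mu$ lies in $[0,1)$ and the geometric-series bound is valid.
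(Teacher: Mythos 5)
Your proof is correct and follows essentially the same route as the paper's own proof: the smoothness-based one-step descent with the $O(\eta^2)$ noise term retained, the PL contraction $(1-2\eta\mu)$, the geometric unrolling with $(1-2\eta\mu)^n \le e^{-2\eta\mu n}$ and $\sum_{i=0}^{n-1}(1-2\eta\mu)^i \le \tfrac{1}{2\eta\mu}$, and the identical step-size substitution. Your closing caveat about needing $2\eta\mu \le 1$ (i.e.\ $n$ large enough) for the contraction and geometric-series bounds to be valid is a detail the paper's proof silently omits, so including it only strengthens the argument.
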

\begin{proof}
By the Assumption of $\Lmath$ satisfying the property in Definition~\ref{def:smooth}, we have
\begin{align*}
&\E_{(\x_{n},\y_n)}[\Lmath(\w_{n+1}) - \Lmath(\w_n)] \\
\leq & \E_{(\x_{n},\y_n)}[\left\langle\nabla \Lmath(\w_n), \w_{n+1} - \w_n \right\rangle] + \frac{L}{2}\E_{(\x_{n},\y_n)}\left[\|\w_{t+n} - \w_n\|^2\right] \\
\overset{(a)}{=} &-\eta \E_{(\x_{n},\y_n)}[\left\langle\nabla \Lmath(\w_n), \nabla\ell\left(\y_n, f(\x_n;\w_n)\right)\right\rangle] + \frac{L}{2}\E_{(\x_{n},\y_n)}\left[\|\nabla\ell\left(\y_n, f(\x_n;\w_n)\right)\|^2\right] \\
\overset{(b)}{=} & - \eta \|\nabla \Lmath(\w_n)\|^2 + \frac{\eta^2L}{2} \E_{(\x_{n},\y_n)}[\|\nabla\ell\left(\y_n, f(\x_n;\w_n)\right)\|^2],
\end{align*}
where (a) is due to the update of $\w_{n+1}=\w_n - \eta \nabla\ell\left(\y_n, f(\x_n;\w_n)\right)$; (b) is due to the Assumption of $\Lmath$ satisfying the property in Definition~\ref{def:unbiased:grad} that $\E_{(\x,\y)}\left[\nabla_\w\ell\left(\y, f(\x;\w)\right)\right] = \nabla \Lmath(\w)$. By using the Assumption of $\Lmath$ satisfying the property in Definition~\ref{def:unbiased:grad} that  $\|\nabla_\w\ell(\y, f(\x;\w))\| \leq G$ and the Assumption of $\Lmath$ satisfying the property in Definition~\ref{def:PL}, we have
\begin{align*}
  &  \E_{(\x_{n},\y_n)}[\Lmath(\w_{n+1}) - \Lmath(\w_n)] \\
    \leq &\frac{\eta^2LG^2}{2} - \eta\|\nabla \Lmath(\w_n)\|^2 \\
    \leq& \frac{\eta^2LG^2}{2} - 2\eta\mu \left(\Lmath(\w_n) - \Lmath(\w_*)\right),
\end{align*}
which implies
\begin{align*}
    &\E_{(\x_{n},\y_n)}[\Lmath(\w_{n+1}) - \Lmath(\w_*)] \\
    \leq & \left(1 - 2\eta\mu\right)\E_{(\x_{n-1},\y_{n-1})}\left[\Lmath(\w_n) - \Lmath(\w_*)\right] + \frac{\eta^2LG^2}{2}\\
     \leq & \left(1 - 2\eta\mu\right)^n\left(\Lmath(\w_1) - \Lmath(\w_*)\right) + \frac{\eta^2LG^2}{2} \sum_{i=0}^{n-1}(1-2\eta\mu)^i.%\\
\end{align*}
Due to $(1 - 2\eta\mu)^n \le \exp(-2\eta \mu n)$ and $\sum_{i=0}^{n-1}(1-2\eta\mu)^i \le \frac{1}{2\eta\mu}$, then by using the setting of $$\eta = \frac{1}{2 n\mu}\log\left(\frac{8n\mu^2(\Lmath(\w_1) - \Lmath(\w_*))}{G^2 L}\right),$$ we have
\begin{align*}
    &\E_{(\x_{n},\y_n)}\left[\Lmath(\w_{n+1}) - \Lmath(\w_*)\right]\\
   \leq&  \exp\left(-2\eta\mu n\right)\left(\Lmath(\w_1) - \Lmath(\w_*)\right) + \frac{\eta G^2 L}{4\mu} \\
    = &\frac{G^2L}{8n\mu^2} +  \frac{G^2L}{8n\mu^2}\log\left(\frac{8n\mu^2(\Lmath(\w_1) - \Lmath(\w_*))}{G^2 L}\right)\\
    \le& O\left( \frac{L}{n\mu^2}\log(n)\right).
\end{align*}
\end{proof}

\section{Proof of Proposition~\ref{prop:mix:1}}\label{app:prop:mix:1}
\begin{proof}
By using the Assumption of $\Lt$ satisfying the property in Definition \ref{def:PL}, we have
\begin{align*}
    &\Lt(\w_*) -  \Lt(\wt_*) \\
    \leq& \frac{\|\nabla \Lt(\w_*)\|^2}{2\mu}\\
    \overset{(a)}{=}& \frac{\|\nabla \Lt(\w_*) - \nabla\Lmath(\w_*)\|^2}{2\mu} \\
    \overset{(b)}{\le} & \frac{\delta_y^2 G^2}{2\mu}
\end{align*}
where (a) is due to the definition of $\w_*$ in (\ref{opt:solution}) so that $\nabla\Lmath(\w_*)=0$; (b) follows the same analysis of (\ref{lemm:2:ineq:grad:dist}) in Lemma~\ref{lemm:2}. Thus we know $\w_*\in\A(\gamma)$ when $\gamma\ge\gamma_0 : = \frac{\delta_y^2 G^2}{2\mu}$. On the other hand, by the definition of $\mu(\gamma)$ in (\ref{notation:term:2}) and the Assumption of $\Lmath$ satisfying the property in Definition \ref{def:PL}, we know $\mu(\gamma)\ge\mu$ when $\gamma \le 8\mu_0$. 
\end{proof}

\section{Algorithm~\AlgOne~and Proof of Theorem~\ref{thm:case:2}}\label{app:thm:case:2}
We present the details of update steps for \AlgOne~and its convergence analysis in this section.
\begin{algorithm}[H]
\caption{\AlgOne}\label{alg:AugDrop}
\begin{algorithmic}[1]
\STATE  \textbf{Input:} $T_1$
\STATE \textbf{Initialize}:  $\w_1\in\R^D, \eta_1, \eta_2>0$\\
// Stage I: Train Augmented Data
\FOR{$t=1,2,\ldots,T_1$}
    \STATE draw $m_1$ examples $(\xt_{t,1}, \yt_{t,1}), \dots, (\xt_{t,m_1}, \yt_{t,m_1})$ at random from augmented data
	\STATE update $\w_{t+1} = \w_t -  \frac{\eta_1}{m_1}\sum_{i=1}^{m_1}\nabla_\w\ell\left(\yt_{t,i}, f(\xt_{t,i}; \w_t)\right)$
\ENDFOR\\
// Stage II: Train Original Data
\FOR{$t=T_1+1,T_1+2,\ldots,T_1+n/m_2$}
    \STATE draw $m_2$ examples $(\x_{t,1}, \y_{t,1}), \dots, (\x_{t,m_2}, \y_{t,m_2})$ without replacement at random from original data 
    \STATE update $\w_{t+1} = \w_t  
    -  \frac{\eta_2}{m_2}\sum_{i=1}^{m_2}\nabla_\w\ell\left(\y_{t,i}, f(\x_{t,i}; \w_t)\right)$
\ENDFOR
\STATE  \textbf{Output:} $\w_{T_1+n/m_2+1}$.
\end{algorithmic}
\end{algorithm}
\begin{proof}
{\bf In the first stage} of the proposed algorithm, we run a mini-batch SGD over the augmented data $\widetilde{\mathcal{D}}$ with $m_1$ as the size of mini-batch. Let $(\xt_{t,i}, \yt_{t,i}), i=1,\ldots, m_1$ be the $m_1$ examples sampled in the $t$th iteration. Let $\widetilde \g_t$ be the average gradient for the $t$ iteration, i.e.
\[
    \widetilde \g_t = \frac{1}{m_1}\sum_{i=1}^{m_1} \nabla_\w \ell(\yt_{t,i}, f(\xt_{t,i}; \w_t))
\]
We then update the solution by mini-batch SGD:
$\w_{t+1} =\w_t - \eta_1 \widetilde\g_t$. 
By using Lemma~4 of~\citep{ghadimi2016mini}, with a probability $1-\delta'$, we have
\begin{align}\label{thm:case:2:ineq:var:concen:1}
    \left\|\widetilde\g_t - \nabla \Lt(\w_t)\right\| \leq \left(1+\sqrt{3\log\frac{1}{\delta'}}\right) \sqrt{\frac{8G^2}{m_1}}.
\end{align}
By the Assumption of $\Lt$ satisfying the property in Definition~\ref{def:smooth} and the update of $\w_{t+1} = \w_t - \eta_1 \widetilde\g_t$, we have
\begin{align*}
&\Lt(\w_{t+1}) - \Lt(\w_t)\\
 \leq & -\eta_1\langle \nabla\Lt(\w_t), \widetilde\g_t\rangle + \frac{\eta_1^2L}{2}\|\widetilde\g_t\|^2 \\
 =& \frac{\eta_1}{2}\|\nabla \Lt(\w_t) - \widetilde\g_t\|^2 - \frac{\eta_1}{2}\|\nabla\Lt(\w_t)\|^2 - \frac{\eta_1(1 - \eta_1 L)}{2}\|\widetilde\g_t\|^2 \\
\overset{(a)}{\leq} & \frac{\eta_1}{2}\left(1+\sqrt{3\log\frac{1}{\delta'}}\right)^2 \frac{8G^2}{m_1} - \eta_1\mu(\Lt(\w_t) - \Lt(\wt_*)),
\end{align*}
where (a) uses the facts that (\ref{thm:case:2:ineq:var:concen:1}), $\eta_1 = 1/L$ and the Assumption of $\Lt$ satisfying the property in Definition~\ref{def:PL}. Thus, with a probability $(1-\delta')^t$, using the recurrence relation, we have
\begin{align}\label{case:2:res:1}
\nonumber &\Lt(\w_{t+1}) - \Lt(\wt_*)\\
\nonumber\leq &(1- \eta_1\mu)(\Lt(\w_t) - \Lt(\wt_*))+ \frac{\eta_1}{2}\left(1+\sqrt{3\log\frac{1}{\delta'}}\right)^2 \frac{8G^2}{m_1} \\
  \leq & \left(1 - \eta_1\mu\right)^t\left(\Lt(\w_1) - \Lt(\wt_*)\right) + \frac{\eta_1}{2}\left(1+\sqrt{3\log\frac{1}{\delta'}}\right)^2 \frac{8G^2}{m_1} \sum_{i=0}^{t-1}(1-\eta_1\mu)^i.
\end{align}
Due to $(1 - \eta_1\mu)^t \le \exp(-t\eta_1 \mu)$ and $\sum_{i=0}^{t-1}(1-\eta_1\mu)^i \le \frac{1}{\eta_1\mu}$, when
\[
    t \geq T_1 := \frac{1}{\eta_1\mu}\log\frac{2(\Lt(\w_1) - \Lt(\wt_*))\mu}{\delta_y^2G^2},
\]
we have
\begin{align}\label{case:2:ineq:0}
\nonumber&\Lt(\w_{t+1}) - \Lt(\wt_*) \\
\nonumber\leq & \exp(-t\eta_1 \mu)\left(\Lt(\w_1) - \Lt(\wt_*)\right) + \left(1+\sqrt{3\log\frac{1}{\delta'}}\right)^2 \frac{4G^2}{\mu m_1}\\
 \leq &\frac{\delta_y^2G^2}{2\mu} + \left(1+\sqrt{3\log\frac{1}{\delta'}}\right)^2\frac{4G^2}{\mu m_1}.
\end{align}
Let $\delta' = \frac{\delta}{2 T_1}$, if we choose $m_1$ such that 
\[
    m_1 = \left(1+\sqrt{3\log\frac{2T_1}{\delta}}\right)^2 \frac{8}{\delta_y^2},
\]
then for any $t\ge T_1$, then with a probability $1-\delta/2$ we have
\begin{align}\label{case:2:ineq}
\Lt(\w_{t+1}) - \Lt(\wt_*)
 \leq \frac{\delta_y^2G^2}{\mu}.
\end{align}

{\bf In the second stage} of the proposed algorithm, we run a mini-batch SGD over the original data $\D$ with $m_2$ as the size of mini-batch. Let $(\x_{t,i}, \y_{t,i}), i=1,\ldots, m_2$ be the $m_2$ examples sampled in the $t$th iteration. Let $\gh_t$ be the average gradient for the $t$ iteration, i.e.
\[
    \gh_t = \frac{1}{m_2}\sum_{i=1}^{m_2} \nabla_\w \ell(\y_{t,i}, f(\x_{t,i}; \w_t))
\]
We then update the solution $\w_{t+1} = \w_t - \eta_2 \gh_t$. 
By using Lemma~4 of~\citep{ghadimi2016mini}, with a probability $1-\delta''$, we have
\begin{align}\label{thm:case:2:ineq:var}
    \left\|\gh_t - \nabla \Lmath(\w_t)\right\| \leq \left(1+\sqrt{3\log\frac{1}{\delta''}}\right) \sqrt{\frac{8G^2}{m_2}}.
\end{align}
By the smoothness of $\Lt(\w)$ and the update of $\w_{t+1} = \w_t - \eta_2 \gh_t$, we have
\begin{align}\label{case:2:res:2}
\nonumber& \Lt(\w_{t+1}) - \Lt(\w_t)\\
\nonumber\leq & -\eta_2\langle \nabla\Lt(\w_t), \gh_t\rangle + \frac{\eta_2^2L}{2}\|\gh_t\|^2 \\
\nonumber =& \frac{\eta_2}{2}\|\nabla \Lt(\w_t) - \gh_t\|^2 - \frac{\eta_2}{2}\|\nabla\Lt(\w_t)\|^2 - \frac{\eta_2(1 - \eta_2 L)}{2}\|\gh_t\|^2 \\
\nonumber\overset{(a)}{\leq} & \eta_2\|\nabla \Lt(\w_t) - \nabla \Lmath(\w_t)\|^2 + \eta_2 \|\gh_t - \nabla \Lmath(\w_t)\|^2 - \eta_2\mu(\Lt(\w_t) - \Lt(\wt_*)) \\
\overset{(b)}{\leq} & \eta_2\left(2\delta_y^2G^2 + \left(1+\sqrt{3\log\frac{1}{\delta''}}\right)^2 \frac{8G^2}{m_2}\right) - \eta_2\mu(\Lt(\w_t) - \Lt(\wt_*)),
\end{align}
where (a) uses the facts that Young's inequality, {$\eta_2 \le 1/L$} and the Assumption of $\Lt$ satisfying the property in Definition~\ref{def:PL}; (b) uses inequality (\ref{thm:case:2:ineq:var}) and the same analysis of (\ref{lemm:2:ineq:grad:dist}) in Lemma~\ref{lemm:2}. 
It is easy to verify that for any $t \in\{ T_1+ 1, \ldots,T_1+ n/m_2\}$, we have, with a probability $(1 - \delta'')^{n/m_2}$, we have 
\begin{align*}
    &\Lt(\w_{t+1}) -\Lt(\wt_*) \\
    \leq& \left(1 - \eta_2\mu\right)\left(\Lt(\w_{t}) - \Lt(\wt_*)\right) + \eta_2\left(G^2\delta_y^2 + \left(1+\sqrt{3\log\frac{1}{\delta''}}\right)^2 \frac{8G^2}{m_2}\right)\\
    \leq& \left(1 - \eta_2\mu\right)^{t}\left(\Lt(\w_{T_1+1}) - \Lt(\wt_*)\right) + \eta_2\left(G^2\delta_y^2 + \left(1+\sqrt{3\log\frac{1}{\delta''}}\right)^2 \frac{8G^2}{m_2}\right) \sum_{i=0}^{t-1}(1-\eta_2\mu)^i\\
    \leq& \frac{\delta_y^2G^2}{\mu} + \frac{1}{\mu}\left(G^2\delta_y^2 + \left(1+\sqrt{3\log\frac{1}{\delta''}}\right)^2 \frac{8G^2}{m_2}\right),
\end{align*}
where the last inequality is due to $(1 - \eta_2\mu)^t  \le 1$, $\sum_{i=0}^{t-1}(1-\eta_2\mu)^i \le \frac{1}{\eta_2\mu}$, and (\ref{case:2:ineq}).
Let $\delta'' = \frac{\delta}{2n/m_2}$,
if we choose $m_2$ such that $m_2 \ge \left(1+\sqrt{3\log\frac{2n}{m_2\delta}}\right)^2  \frac{4}{\delta_y^2}$, for example,
\[
    m_2 = \left(1+\sqrt{3\log\frac{2n}{\delta}}\right)^2 \frac{4}{\delta_y^2},
\]
then with a probability $1 - \delta$, for any $t \in\{ T_1+ 1, \ldots,T_1+ n/m_2\}$ we have
\begin{align*}
    \Lt(\w_{t+1}) -\Lt(\wt_*) \leq\frac{4\delta_y^2G^2}{\mu}.
\end{align*}
Therefore, $\w_t \in \A(8\gamma_0)$ for any $t \in\{ T_1+ 2, \ldots,T_1+ n/m_2+1\}$.
Following the standard analysis in Appendix~\ref{app:eqn:train-orig}, we have
\[
   \E\left[\Lmath(\w_{T_1+ n/m_2+1}) - \Lmath(\w_*)\right] \leq \frac{G^2L}{4n\mu_c^2} +  \frac{G^2L}{4n\mu_c^2}\log\left(\frac{4n\mu_c^2(\Lmath(\w_1) - \Lmath(\w_*))}{G^2 L}\right), %\frac{4G^2 L}{n\mu_c}\log\frac{n\mu_c^2\delta_P}{4 L\mu}
\]
where $\mu_c = \mu(8\gamma_0)$.
\end{proof}

\section{Optimal Solutions of  $\Lmath_a(\w)$ and $\Lmath_c(\w)$}\label{app:mix:loss:solution}
By the definition of $\ell_a$ in (\ref{eqn:ell:a}) and $\P_\x = \P_{\xt}$, we know
\begin{align}\label{app:mix:loss:inq:1}
  \nonumber&\ell_a(\yt, f(\xt; \w))\\
  \nonumber= &\min\limits_{\|\z - \yt\|\leq \delta_y}\ell(\z, f(\x;\w))\\
  \le &\ell(\y, f(\x;\w))
\end{align}
since $\|\y - \yt\|\le\delta_y$. Therefore, by (\ref{eqn:L:a}), (\ref{app:mix:loss:inq:1}) and $\P_\x = \P_{\xt}$ we have
\begin{align}\label{app:mix:loss:inq:2}
    \nonumber&\Lmath_a(\w) \\
    \nonumber=& \E_{\y}[\Lmath_a(\w)] \\
    \nonumber=& \E_{(\x,\yt,\y)}\left[\ell_a(\yt, f(\x; \w)) \right] \\
    \nonumber\le & \E_{(\x,\yt,\y)}\left[\ell(\y, f(\x;\w))\right]\\
    \nonumber= & \E_{(\x,\y)}\left[\ell(\y, f(\x;\w))\right]\\
    = &\Lmath(\w).
\end{align}
Since $\ell$ is a non-negative loss function, then we know 
\begin{align*}
    0\le \Lmath_a(\w_*) \le  \Lmath(\w_*) = 0,
\end{align*}
which implies that 
\begin{align*}
    \Lmath_a(\w_*)  = 0,
\end{align*}
and thus
\begin{align*}
    \Lmath_a(\w_*)  \le \Lmath_a(\w), \quad \forall \w.
\end{align*}
Therefore, $\w_*$ also minimizes $\Lmath_a(\w)$.

On the other hand, by (\ref{loss:mixed:weighted}) we know 
\begin{align*}
    \Lmath_c(\w_*) = \lambda\Lmath(\w_*) + (1 - \lambda)\Lmath_a(\w_*) = 0.
\end{align*}
Therefore,
\begin{align*}
    \Lmath_c(\w_*)  \le \Lmath_c(\w), \quad \forall \w,
\end{align*}
i.e, $\w_*$ also minimizes $\Lmath_c(\w)$,
indicating that $\Lmath_c(\w)$ shares the same minimizer as $\Lmath(\w)$.

\section{Algorithm~\AlgTwo~and Proof of Theorem~\ref{thm:case:2:mix:loss}}\label{app:thm:case:2:mix:loss}
We present the details of update steps for \AlgTwo~and its convergence analysis in this section.
\begin{algorithm}[H]
\caption{\AlgTwo}\label{alg:AugTwo}
\begin{algorithmic}[1]
\STATE  \textbf{Input:} $\lambda$
\STATE \textbf{Initialize}:  $\w_1\in\R^D, \eta>0$
\FOR{$t=1,2,\ldots,n$}
    \STATE draw an example $(\x_{t}, \y_{t})$ without replacement at random from original data
    \STATE draw $m_0$ examples $(\xt_{t,1}, \yt_{t,1}), \dots, (\xt_{t,m_0}, \yt_{t,m_0})$  at random from augmented data
	\STATE compute $\gh_t =  \lambda \nabla \ell(\y_t, f(\x_t; \w_t))+(1 - \lambda)\frac{1}{m_0}\sum_{i=1}^{m_0}\nabla \ell_a(\yt_{t,i}, f(\xt_{t,i}; \w_t))$
	\STATE update $\w_{t+1} = \w_t -  \eta \gh_t$
\ENDFOR
\STATE  \textbf{Output:} $\w_{n+1}$.
\end{algorithmic}
\end{algorithm}
\begin{proof}
Recall that 
\begin{align}\label{thm:2:inq:1}
    \Lmath_c(\w) = \lambda\Lmath(\w) + (1 - \lambda)\Lmath_a(\w),
\end{align}
where $\Lmath_a(\w) = \E_{(\xt,\yt)}\left[\ell_a(\yt, f(\xt; \w)) \right] = \E_{(\xt,\yt)}\left[\min\limits_{\|\z - \yt\| \leq \delta_y} \ell(\z, f(\xt; \w))\right]$ and
\begin{align}\label{thm:2:inq:2}
    \gh_t =  \lambda \nabla \ell(\y_t, f(\x_t; \w_t))+(1 - \lambda)\frac{1}{m_0}\sum_{i=1}^{m_0}\nabla \ell_a(\yt_{t,i}, f(\xt_{t,i}; \w_t)).
\end{align}
By the update of $\w_{t+1} = \w_t - \eta \gh_t$ and by the Assumption of $\Lmath_c$ satisfying the property in Definition~\ref{def:smooth}, we have~\citep{opac-b1104789}
\begin{align}\label{case:1:res:2}
\nonumber& \E_t\left[\Lmath_c(\w_{t+1}) - \Lmath_c(\w_t)\right]\\
 \nonumber\leq & -\eta\E_t\left[\langle \nabla\Lmath_c(\w_t), \gh_t\rangle\right] + \frac{\eta^2L}{2}\E_t\left[\|\gh_t\|^2\right] \\
  \nonumber\overset{(a)}{\leq} & -\eta(1-\eta L)\E_t\left[\| \nabla\Lmath_c(\w_t)\|^2\right] + \eta^2L\E_t\left[\|\gh_t-\nabla\Lmath_c(\w_t)\|^2\right]  \\
   \nonumber\overset{(b)}{\leq} & -\eta(1-\eta L)\E_t\left[\| \nabla\Lmath_c(\w_t)\|^2\right] + \frac{9}{8}\lambda^2\eta^2L\E_t\left[\left\|\nabla \ell(\y_t, f(\x_t; \w_t))-\nabla\Lmath(\w_t)\right\|^2\right] \\
  \nonumber &+9(1- \lambda)^2\eta^2L\E_t\left[\left\|\frac{1}{m_0}\sum_{i=1}^{m_0}\nabla \ell_a(\yt_{t,i}, f(\xt_{t,i}; \w_t))-\nabla\Lmath_a(\w_t)\right\|^2\right]  \\
\nonumber\overset{(c)}{\leq} & -\eta(1-\eta L)\E_t\left[\| \nabla\Lmath_c(\w_t)\|^2\right] + 
\frac{9}{2}\lambda^2\eta^2LG^2 +\frac{36(1- \lambda)^2\eta^2LG^2}{m_0} \\
\overset{(d)}{\leq} & -\eta(1-\eta L)\E_t\left[\| \nabla\Lmath_c(\w_t)\|^2\right] + 5\lambda^2\eta^2LG^2,
\end{align}
where $\E_t[\cdot]$ is taken over random variables $(\x_{t}, \y_{t}), (\xt_{t,1}, \yt_{t,1}), \dots, (\xt_{t,m_0}, \yt_{t,m_0})$; (a) uses the facts that Young's inequality $\|\a-\b\|^2\le2\|\a\|^2+2\|\b\|^2$ and $\E[\gh_t ] = \nabla\Lmath_c(\w_t)$; (b) uses the facts that (\ref{thm:2:inq:1}) (\ref{thm:2:inq:2}) and Young's inequality $\|\a + \b\|^2\le (1+1/c)\|\a \|^2 + (1+c)\|\b\|^2$ with $a=8$; (c) use the same analysis in (\ref{lemm:2:ine:2}) from the proof of Lemma~\ref{lemm:2}, the facts that the Assumption of $\Lmath$ satisfying the property in Definition~\ref{def:unbiased:grad} and by Jensen's inequality, we also have $\|\nabla\Lmath(\w)\| \leq G$, implying that $\left\|\nabla \ell(\y, f(\x;\w)) - \nabla \Lmath(\w) \right\|^2\le 4G^2$; (d) holds by setting $m_0 \ge \frac{72(1-\lambda)^2}{\lambda^2}$ since we have sufficiently large number of augmented examples.
Thus, since $\eta \le \frac{1}{2L}$ and by using the Assumption of $\Lmath_c$ satisfying the property in Definition~\ref{def:PL}, we have
\begin{align*}
\E_t\left[\Lmath_c(\w_{t+1}) - \Lmath_c(\w_t)\right]\leq & -\eta\mu\E_t\left[\Lmath_c(\w_t)\right] + 5\lambda^2\eta^2LG^2,
\end{align*}
and therefore
\begin{align}\label{case:1:res:3}
    \nonumber&\E_n\left[\Lmath_c(\w_{n+1})\right]\\
    \nonumber\leq&\exp\left(-\eta\mu n\right)\Lmath_c(\w_1) + \frac{5\lambda^2 \eta LG^2}{\mu}\\
    \leq & \exp\left(-\eta\mu n\right)\Lmath(\w_1) + \frac{5\lambda^2 \eta LG^2}{\mu},
\end{align}
where last inequality is due to the fact that $\Lmath_c(\w)\le\Lmath(\w)$.
In (\ref{case:1:res:3}), by choosing
\[
    \eta = \frac{1}{\mu n}\log\frac{n\mu^2\Lmath(\w_1)}{\lambda^2LG^2},
\]
we have
\begin{align}\label{case:1:res:4}
    \E_n\left[\Lmath_c(\w_{n+1})\right]
    \leq \frac{\lambda^2LG^2}{n\mu^2} + \frac{5\lambda^2 LG^2}{n\mu^2}\log\frac{n\mu^2\Lmath(\w_1)}{\lambda^2LG^2}.
\end{align}
Since  $\Lmath(\w)  = \frac{1}{\lambda}\Lmath_c(\w) - \frac{1 - \lambda}{\lambda}\Lmath_a(\w)$, then (\ref{case:1:res:4}) becomes
\begin{align}\label{case:1:res:5}
    \nonumber&\E_n\left[\Lmath(\w_{n+1})\right]\\
    \nonumber\leq& \frac{\lambda LG^2}{n\mu^2} + \frac{5\lambda LG^2}{n\mu^2}\log\frac{n\mu^2\Lmath(\w_1)}{\lambda^2LG^2} - \frac{1 - \lambda}{\lambda}\E\left[\Lmath_a(\w_{n+1})\right]\\
    \leq &\frac{\lambda LG^2}{n\mu^2} + \frac{5\lambda LG^2}{n\mu^2}\log\frac{n\mu^2\Lmath(\w_1)}{\lambda^2LG^2},
\end{align}
where the last inequality is due to $\lambda\in(0,1)$ and $\Lmath_a(\w_{n+1})\ge \Lmath_a(\w_*) = 0$.
\end{proof}

\section{Proofs in Appendix~\ref{app:main:res:preserve}}
We include the proofs for Appendix section ``Main Results for label-preserving Augmentation''.
\subsection{Proof of Lemma~\ref{lemm:1}}\label{app:lemm:1}
The analysis is similar to that for Lemma~\ref{lemm:2}. For completeness, we include it here.
\begin{proof}
Following the same analysis in Lemma~\ref{lemm:2}, we can have the same result as in (\ref{lemm:2:ineq:sm}). That is to say, we have
\begin{align}\label{lemm:1:ineq:sm}
\E_{(\xt_{t},\yt_{t})}[\Lmath(\w_{t+1}) - \Lmath(\w_t)] \le \frac{\eta}{2}\left(\|\nabla \Lmath(\w_t) - \nabla \Lt(\w_t)\|^2 + \frac{4G^2}{m_0} - \|\nabla \Lmath(\w_t)\|^2\right).
\end{align}
We have
\begin{align}\label{lemm:1:ineq:grad:dist}
\nonumber&\|\nabla \Lmath(\w_t) - \nabla \Lt(\w_t)\|\\
\nonumber\le & \int d\x\y \|\P_\x(\x) - \P_{\xt}(\x)\| \|\nabla \ell(\y, f(\x;\w_t))\| \\
\nonumber\overset{(a)}{\le} & G\int d\x \|\P_\x(\x) - \P_{\xt}(\x)\| \\
\nonumber\overset{(b)}{\le} &  G\sqrt{2D_{KL}(\P_\x\|\P_{\xt})} \\
\overset{(c)}{=}& G\sqrt{2\delta_P},
\end{align}
where (a) is due to the Assumption of $\Lmath$ satisfying the property in Definition~\ref{def:unbiased:grad}; (b) uses Pinsker's inequality~\citep{csiszar2011information,tsybakov2008introduction}; (c) is due to (\ref{dist:preserve}).
With inequality (\ref{lemm:1:ineq:grad:dist}), by using the facts that $\eta = 1/L$ and the Assumption of $\Lmath$ satisfying the property in Definition~\ref{def:PL}, inequality (\ref{lemm:1:ineq:sm}) becomes
\begin{align*}
&\E_{(\xt_{t},\yt_{t})}[\Lmath(\w_{t+1}) - \Lmath(\w_t)] \\
\leq& \eta \delta_PG^2  + \frac{4G^2}{m_0}- \frac{\eta}{2}\|\nabla \Lmath(\w_t)\|^2 \\
\leq& \eta \delta_PG^2 + \frac{4G^2}{m_0} - \eta\mu \left(\Lmath(\w_t) - \Lmath(\w_*)\right)\\
\leq& 2\eta \delta_PG^2 - \eta\mu \left(\Lmath(\w_t) - \Lmath(\w_*)\right),
\end{align*}
where the last inequality is due to the selection of $m_0 \ge \frac{4}{\eta \delta_P} $.
Then we have
\begin{align*}
    & \E_{(\xt_{t},\yt_{t})}[\Lmath(\w_{t+1}) - \Lmath(\w_*)]\\
    \leq & \left(1 - \eta\mu\right)\E_{(\xt_{t-1},\yt_{t-1})}[\Lmath(\w_t) - \Lmath(\w_*)] + 2\eta \delta_PG^2\\
     \leq & \left(1 - \eta\mu\right)^t\left(\Lmath(\w_1) - \Lmath(\w_*)\right) + 2\eta \delta_PG^2 \sum_{i=0}^{t-1}(1-\eta\mu)^i.
\end{align*}
Due to $(1 - \eta\mu)^t \le \exp(-t\eta \mu)$ and $\sum_{i=0}^{t-1}(1-\eta\mu)^i \le \frac{1}{\eta\mu}$, when
\[
    t \geq \frac{L}{\mu}\log\frac{(\Lmath(\w_1) - \Lmath(\w_*))\mu}{2\delta_P G^2},
\]
we know
\[
    \E_{(\xt_{t},\yt_{t})}[\Lmath(\w_{t+1}) - \Lmath(\w_*)] \leq \frac{4\delta_PG^2}{\mu}.
\]
\end{proof}

\subsection{Proof of Proposition~\ref{prop:pre:1}}\label{app:prop:pre:1}
\begin{proof}
By using the Assumption of $\Lt$ satisfying the property in Definition \ref{def:PL}, we have
\begin{align*}
    &\Lt(\w_*) -  \Lt(\wt_*) \\
    \leq& \frac{\|\nabla \Lt(\w_*)\|^2}{2\mu}\\
    \overset{(a)}{=}& \frac{\|\nabla \Lt(\w_*) - \nabla\Lmath(\w_*)\|^2}{2\mu} \\
    \overset{(b)}{\le} & \frac{\delta_P G^2}{\mu}
\end{align*}
where (a) is due to the definition of $\w_*$ in (\ref{opt:solution}) so that $\nabla\Lmath(\w_*)=0$; (b) follows the same analysis of (\ref{lemm:1:ineq:grad:dist}) in Lemma~\ref{lemm:1}. Thus we know $\w_*\in\A(\gamma)$ when $\gamma\ge\gamma_1 : = \frac{\delta_P G^2}{\mu}$. On the other hand, by the definition of $\mu(\gamma)$ in (\ref{notation:term:2}) and the Assumption of $\Lmath$ satisfying the property in Definition \ref{def:PL}, we know $\mu(\gamma)\ge\mu$ when $\gamma \le 4\mu_1$. 

\end{proof}

\subsection{Proof of Theorem~\ref{thm:case:1}}\label{app:thm:case:1}
This proof is similar to the proof of Theorem~\ref{thm:case:2}. For completeness, we include it here.

\begin{proof}
{\bf In the first stage} of the proposed algorithm, we run a mini-batch SGD over the augmented data $\widetilde{\mathcal{D}}$ with $m_1$ as the size of mini-batch. Using the similar analysis in (\ref{case:2:res:1}) from Theorem~\ref{thm:case:2}, we have, with a probability $(1-\delta')^t$,
\begin{align*}
\Lt(\w_{t+1}) - \Lt(\wt_*)
  \leq  \left(1 - \eta_1\mu\right)^t\left(\Lt(\w_1) - \Lt(\wt_*)\right) + \frac{\eta_1}{2}\left(1+\sqrt{3\log\frac{1}{\delta'}}\right)^2 \frac{8G^2}{m_1} \sum_{i=0}^{t-1}(1-\eta_1\mu)^i.
\end{align*}
Due to $(1 - \eta_1\mu)^t \le \exp(-t\eta_1 \mu)$ and $\sum_{i=0}^{t-1}(1-\eta_1\mu)^i \le \frac{1}{\eta_1\mu}$, when
\[
    t \geq T_1 := \frac{L}{\mu}\log\frac{2(\Lt(\w_1) - \Lt(\wt_*))\mu}{\delta_PG^2},
\]
we have
\begin{align}\label{case:1:ineq:0}
\Lt(\w_{t+1}) - \Lt(\wt_*)
 \leq &\frac{\delta_PG^2}{2\mu} + \left(1+\sqrt{3\log\frac{1}{\delta'}}\right)^2\frac{8G^2}{2\mu m_1}.
\end{align}
Let $\delta' = \frac{\delta}{2 T_1}$, if we choose $m_1$ such that 
\[
    m_1 = \left(1+\sqrt{3\log\frac{2T_1}{\delta}}\right)^2 \frac{8}{\delta_P},
\]
then for any $t\ge T_1$, we have
\begin{align}\label{case:1:ineq}
\Lt(\w_{t+1}) - \Lt(\wt_*)
 \leq \frac{\delta_PG^2}{\mu}.
\end{align}

{\bf In the second stage} of the proposed algorithm, we run a mini-batch SGD over the original data $\D$ with $m_2$ as the size of mini-batch. Using the same analysis in (\ref{case:2:res:2}) from Theorem~\ref{thm:case:2}, we have 
\begin{align*}
\Lt(\w_{t+1}) - \Lt(\w_t)\leq & \eta_2\|\nabla \Lt(\w_t) - \nabla \Lmath(\w_t)\|^2 + \eta_2 \|\gh_t - \nabla \Lmath(\w_t)\|^2 - \eta_2\mu(\Lt(\w_t) - \Lt(\wt_*)).
\end{align*}
Then by (\ref{thm:case:2:ineq:var}) in the proof of Theorem~\ref{thm:case:2}  and (\ref{lemm:1:ineq:grad:dist}) in the proof of Lemma~\ref{lemm:1}, we have, with a probability $1-\delta''$,
\begin{align*}
\Lt(\w_{t+1}) - \Lt(\w_t)\leq & 2\eta_2 \delta_PG^2 + \eta_2 \left(1+\sqrt{3\log\frac{1}{\delta''}}\right)^2\frac{8G^2}{m_2} - \eta_2\mu(\Lt(\w_t) - \Lt(\wt_*)).
\end{align*}
It is easy to verify that for any $t \in\{ T_1+ 1, \ldots,T_1+ n/m_2\}$, we have, with a probability $(1 - \delta'')^{n/m_2}$, we have 
\begin{align*}
    &\Lt(\w_{t+1}) -\Lt(\wt_*) \\
    \leq& \left(1 - \eta_2\mu\right)\left(\Lt(\w_{t}) - \Lt(\wt_*)\right) + \eta_2\left(2\delta_PG^2 + \left(1+\sqrt{3\log\frac{1}{\delta''}}\right)^2 \frac{8G^2}{m_2}\right)\\
    \leq& \left(1 - \eta_2\mu\right)^{n/m_2}\left(\Lt(\w_{T_1+1}) - \Lt(\wt_*)\right) + \eta_2\left(2\delta_PG^2 + \left(1+\sqrt{3\log\frac{1}{\delta''}}\right)^2 \frac{8G^2}{m_2}\right) \sum_{i=0}^{n/m_2-1}(1-\eta_2\mu)^i\\
    \leq& \frac{\delta_PG^2}{\mu} + \frac{1}{\mu}\left(2\delta_PG^2 + \left(1+\sqrt{3\log\frac{1}{\delta''}}\right)^2 \frac{8G^2}{m_2}\right),
\end{align*}
where the last inequality is due to $(1 - \eta_2\mu)^t  \le 1$, $\sum_{i=0}^{t-1}(1-\eta_2\mu)^i \le \frac{1}{\eta_2\mu}$, and (\ref{case:1:ineq}).
Let $\delta'' = \frac{\delta}{2n/m_2}$,
if we choose $m_2$ such that $m_2 \ge \left(1+\sqrt{3\log\frac{2n}{m_2\delta}}\right)^2 \frac{8}{\delta_P}$, for example,
\[
    m_2 = \left(1+\sqrt{3\log\frac{2n}{\delta}}\right)^2 \frac{8}{\delta_P},
\]
then with a probability $1 - \delta$, for any $t \in\{ T_1+ 1, \ldots,T_1+ n/m_2\}$ we have
\begin{align*}
    \Lt(\w_{t+1}) -\Lt(\wt_*) \leq\frac{4\delta_PG^2}{\mu}.
\end{align*}
Therefore, $\w_t \in \A(4\gamma_1)$ for any $t \in\{ T_1+ 2, \ldots,T_1+ n/m_2+1\}$.
Following the similar analysis in Appendix~\ref{app:eqn:train-orig}, we have
\[
   \E\left[\Lmath(\w_{T_1+ n/m_2+1}) - \Lmath(\w_*)\right] \leq \frac{G^2L}{4n\mu_e^2} +  \frac{G^2L}{4n\mu_e^2}\log\left(\frac{4n\mu_e^2(\Lmath(\w_1) - \Lmath(\w_*))}{G^2 L}\right),
\]
where $\mu_e = \mu(4\gamma_1)$.
\end{proof}

\end{document}